\documentclass{article}

\usepackage{arxiv}
\makeatletter
\AtBeginDocument{%
  \pagestyle{fancy}%
  \rhead{}%
}

\renewcommand{\keywords}[1]{\par\noindent\textbf{Keywords:} #1}
\makeatother
\usepackage{booktabs}
\usepackage{tabularx}
\usepackage[utf8]{inputenc} % allow utf-8 input
\usepackage[T1]{fontenc}    % use 8-bit T1 fonts
\usepackage{amssymb} % for \varnothing
\usepackage{amsthm} % for proof environment

\usepackage{url}            % simple URL typesetting
\usepackage{amsfonts}       % blackboard math symbols
\usepackage{microtype}      % microtypography
\usepackage{graphicx}
\usepackage{amsmath}
\usepackage{array} 
\usepackage{natbib}
\usepackage{lineno}         % for \linenumbers
\usepackage{longtable}
\usepackage{pdflscape}

\usepackage{enumitem}
\setlist[itemize]{leftmargin=1.6em}
\setlist[enumerate]{leftmargin=1.8em}

\usepackage{calc}

\usepackage{authblk}

\makeatletter
\renewcommand\AB@affilnote[1]{\textsuperscript{#1}\,}
\makeatother

\usepackage[colorlinks=true,linkcolor=black,citecolor=blue,urlcolor=blue]{hyperref}

\newtheorem{definition}{Definition}

\newtheorem{proposition}{Proposition}

\newtheorem{corollary}{Corollary}

\newcommand{\Sys}{\mathsf{Sys}}

\newcommand{\Hom}{\mathrm{Hom}}

\newcommand{\diam}{\diamondsuit}
\newcommand{\supp}{\text{supp}}

\usepackage{tikz}
\usetikzlibrary{positioning}
\usepackage{tikz-3dplot}
\usetikzlibrary{arrows.meta,positioning,calc,decorations.markings}
\usepackage{pgfplots}
\pgfplotsset{compat=1.18}
\definecolor{diagramblue}{RGB}{13,101,139}
\definecolor{stepIgray}{RGB}{242,243,245}
\definecolor{stepIIteal}{RGB}{226,242,239}

\definecolor{stepIteal}{RGB}{220,240,237}
\definecolor{stepIIamber}{RGB}{250,235,200}

\newcommand{\stepboxvisual}[2]{%
  \tikz[baseline=(X.base)]{
    \node[
      fill=#1,
      rounded corners=2pt,
      line width=0.00pt,
      inner xsep=5.5pt,
      inner ysep=1.6pt,
      minimum height=1ex,
      align=center
    ] (X) {%
      \strut
      \makebox[\widthof{\textbf{Step~II}}][c]{\textbf{#2}}%
    };
  }%
}

\DeclareRobustCommand{\StepI}{%
  \texorpdfstring{\stepboxvisual{stepIteal}{Step~I}}{Step I}%
}

\DeclareRobustCommand{\StepII}{%
  \texorpdfstring{\stepboxvisual{stepIIamber}{Step~II}}{Step II}%
}
\title{Convex AI Compositionality and the Governance of AI System Populations}

\author[1,2,3]{Andrea Ferrario%
\thanks{\href{mailto:aferrario@ethz.ch}{\texttt{aferrario@ethz.ch}}}%
\hspace{0.35em}}
\affil[1]{Institute of Biomedical Ethics and History of Medicine, University of Z\"urich, Z\"urich, Switzerland}
\affil[2]{SUPSI, Dalle Molle Institute for Artificial Intelligence (IDSIA), Lugano, Switzerland}
\affil[3]{ETH Z\"urich, Z\"urich, Switzerland}

\begin{document}

\maketitle

\begin{abstract}
AI governance increasingly requires providers and public authorities to reason about multiple AI instantiations, alternative versions, and deployment configurations of  multiple AI systems. Yet current regulation remains predominantly single-system-centric, acknowledging such multiplicity only sparsely without treating collections of related AI systems as governance objects.
This creates an \emph{AI population governance problem}: determining which instantiations can be meaningfully considered together and how their changing configurations can be represented and monitored. The first requirement has recently been addressed through trustworthiness-based accounts of AI identity. We address the second by introducing \emph{convex AI compositionality}: a formal representation of the configurations generated by finite AI system populations that uses convex spaces. The core idea is that convex compositions of the operational states that a population of AI system instantiations may occupy over time are compatible with lifecycle reachability across the population and can preserve the formal identity relations between these systems. Well-known statistical and geometric constructions, such as weighted state distributions and convex hulls, become AI governance tools for distinguishing operational states, population weights, heterogeneity, and AI configuration change across different governance modes while remaining compatible, under stated conditions, with lifecycle reachability and AI identity. We illustrate our AI population governance framework through distributed healthcare deployments and controlled deployment of recruitment AI variants.
\end{abstract}

%% Keywords. The author(s) should pick words that accurately describe
%% the work being presented. Separate the keywords with commas.
\keywords{AI, Artificial Intelligence, Governance, Convex, Barycentric, Algebra, Time, Identity, Category Theory}

%%%%%%%%%%%%%%%%%%%%%%%%%%%%%%%%%%%%%%%%%%%%%%%%%%%%%
%%%%%%%%%%%%%%%%%%%%%%%%%%%%%%%%%%%%%%%%%%%%%%%%%%%%%
\section{Introduction}
\label{section:introduction}
The mass production and commodification of artificial intelligence (AI) systems has made AI governance increasingly concerned with populations of related AI system instantiations distributed across space and time. On the one hand, providers may operate several copies of one system across different institutions, maintain production and canary versions of their online service simultaneously, or allocate inference traffic among alternative configurations---including 10,000 concurrent agents to address the Navier--Stokes Millennium Prize Problem\footnote{\url{https://openai.com/index/navier-stokes-solution/}}---while users personalize AI system instances with their preferences, prompt styles, and shared data. On the other hand, governance actors must determine which deployments can be compared under common and known evidence, which changes may remain covered by an existing assessment, whether a problem  local in space or time indicates a population-wide issue, and which other instantiations in the population should be included when corrective action is required. Yet current regulation remains predominantly single-system-centric. For instance, the European AI Act (AIA) structures lifecycle obligations around \emph{the} high-risk AI system, while only sparsely acknowledging multiplicity through references to \emph{all} AI systems concerned and, in Regulation (EU) 2026/1744, to ``individual units of the same type and model'' \citep{EUAIAct2024,EUAIActOmnibus2026}. Such provisions expose multiplicity without making collections of related instantiations a governance object. We call these multiple comparable AI system instantiations \emph{AI populations}. Their differences may also determine how risks are distributed across affected individuals:
% degradation may remain concentrated in particular contexts or populations, while traffic allocation may expose different shares of users to different variants. 
determining the relevant population and its evolution therefore affects  accountability and corrective action.

In this work, \textbf{we address this AI population governance gap through a two-step approach}. First, we argue that governance actors must determine which concrete AI system instantiations belong to a common comparison domain and how they remain identifiable across deployment and change
(\emph{comparability and identity}). Second, once such a finite population has been identified, AI governance requires an abstract representation of the states occupied by its members and of the population configurations generated by those states (\emph{state compositionality and governance}).

The first step has been recently developed through trustworthiness-based AI identity criteria, which specify when systems belong to a common AI system \emph{type} and when instantiations remain identical across deployments and over time \citep{Ferrario2025TrustworthinessMetaphysics}. Their category-theoretic extension represents \emph{quantified AI states}, admissible lifecycle transformations, and reachability among states \citep{Ferrario2026CategoryIdentity}, providing the structure needed to identify AI instantiations and their trajectories over time.
Here, we develop the second and novel step. Using convex spaces, equivalently barycentric algebras \citep{fritz2009convex,zamojska2024barycentric,Stone1949Barycentric}, we represent AI population configurations through convex weighted compositions of quantified AI states. In real-world scenarios, these weights may represent fractions of (to be) deployed instantiations, processed cases, or inference traffic. Then, convex compositionality represents spaces of \emph{possible composed profiles} generated by alternative allocations over AI population states, allowing us to study their compatibility with the identity relations established in the first step. We embed convex compositionality into an AI population governance workflow that can cover retrospective, operational, and prospective AI governance. Common constructions in statistical mechanics and convex geometry, i.e., state probability distributions, their barycenters, variances, and convex hulls \citep{berlinsky2019statistical,hug2020lectures}, provide AI population governance objects for tracking operational prevalence, heterogeneity,  and configuration change across deployments and time. Finally, we illustrate our approach through distributed healthcare deployments and controlled deployment of recruitment AI variants, and relate these constructions back to the AI population governance gap. The paper makes three contributions:
\begin{enumerate}[leftmargin=1.4em]
    \item \textbf{We define convex AI compositionality} using basic convex space mathematics \citep{fritz2009convex,zamojska2024barycentric,Stone1949Barycentric} and establish conditions under which AI state convex composition preserves trustworthiness levels and AI identity relations \citep{Ferrario2025TrustworthinessMetaphysics,Ferrario2026CategoryIdentity}.
    \item \textbf{We represent finite AI system populations} through weighted state distributions and convex hulls over anchor states, distinguishing their realized and possible configurations, aggregate conditions, and heterogeneity.
    \item \textbf{We introduce an AI population governance workflow} using these objects as tools to study AI state and population changes, and weight redistributions in retrospective, operational, and prospective governance modes.
\end{enumerate}

%%%%%%%%%%%%%%%%%%%%%%%%%%%%%%%%%%%%%%%%%%%%%%%%%%%%%%%%
%%%%%%%%%%%%%%%%%%%%%%%%%%%%%%%%%%%%%%%%%%%%%%%%%%%%%%%%
\section{Motivation: Two Illustrative Cases of AI State Composition}
\label{section:motivation}
We start by presenting two illustrative cases that help isolate the problem. The first concerns the monitoring of multiple deployed \emph{AI instantiations}, namely, concrete AI system realizations over time;\footnote{
\emph{AI instantiation} denotes an AI realization that we select as a unit of our AI governance analysis. We prefer this terminology to others such as  \emph{copies}, \emph{tokens} and cognates. An AI instantiation need not correspond one-to-one to a model artifact, process, virtual machine, container, endpoint, tenant, or user
account. Several AI instantiations may share technical infrastructure, while
one deployment may itself comprise several models or runtime replicas. Here, following the EU AI Act \citep{EUAIAct2024}, an AI system is understood as a machine-based system operating with varying degrees of autonomy, potentially adapting after deployment, and inferring from its inputs how to generate outputs---including predictions, content, recommendations, or decisions---that may affect physical or virtual environments (Art.~3, EU AI Act).} the second concerns the selection and governance of alternative AI system designs before and during deployment.

%%%%%%%%%%%%%%%%%%%%%%%%%%%%%%%%%%%%%%%%%%%%%%%%%%%%%%%%%%%%%
\subsection{Monitoring Multiple AI System Instantiations Across Healthcare Institutions} 
\label{subsection:AI_healthcare_monitoring}
Consider an AI system developed by a medical technology company to support the detection of neurological lesions in magnetic resonance images. The system is successfully deployed in 15 healthcare institutions across the country. All instantiations share the same intended purpose and specifications of correct functioning. The provider therefore treats these deployments as instantiations of one \emph{AI system type} and relies on a common body of evidence to manage their maintenance. These systems do not, however, operate under identical conditions. Different hospitals use scanners produced by different manufacturers, apply different image processing protocols, and serve different populations. Some institutions integrate the system directly into their radiology workflow, while others require clinicians to upload examinations manually. These modifications gradually produce measurable differences among the instantiations.
After some time, a certain AI system starts exhibiting a growing number of false-positive lesion detections after the hospital in which it is deployed changes one of its imaging protocols. Two smaller institutions show no comparable decline in accuracy, but their clinicians override the system considerably more often than clinicians in all the other locations. As a result, the provider now faces a collection of AI system instantiations that, at the moment of deployment, were arguably identical given the common intended use and  operational standards, but over time, seem to occupy different \emph{states} determined by their measured performance, robustness, human-oversight, and operational-efficiency values.

Managing each AI instantiation separately is not sufficient. Managers from the participating hospitals begin discussing the findings jointly and ask the provider to demonstrate that the system remains trustworthy across the healthcare network. 
They need to determine whether the common validation evidence remains applicable to every deployment and whether some instantiations require local corrective action.
Regulators may also ask whether an incident observed in one hospital is an isolated problem or evidence of broader degradation affecting the collection of AI system instantiations. The immediate tasks for the provider are to define criteria for comparing the AI instantiations and characterize the configuration of operational states that this AI system collection \emph{may} occupy under specified operational conditions. These would help answer the questions: which variations remain compatible with appropriate functioning within the system type, and when does an instantiation transition to a different state? How should the \emph{admissible configurations of this population be represented}, and how can their evolution over time be monitored for AI governance purposes?

%%%%%%%%%%%%%%%%%%%%%%%%%%%%%%%%%%%%%%%%%%%%%%%%%%%%%%%%%%%%%
\subsection{Managing LLM-Based Recruitment Variants Through Controlled Deployment}
\label{subsection:Alt_AI_systems}
Consider a provider operating an AI system that assists a large employer in comparing candidate applications with job requirements and producing recommendations for  recruiters. The provider already operates a current production version but is developing a new candidate version, referred to as the \emph{canary}, together with ten  variants. These variants differ in their underlying models, retrieval sources, explanation mechanisms, and thresholds for human review. However, they serve the same intended use and are evaluated through a common protocol. This protocol is described through operational dimensions that include recommendation quality, robustness, human oversight, and privacy. The variants may achieve the same overall degree of correct functioning through different combinations of these properties.

Such a system is subject to the requirements applicable to high-risk AI systems under the EU AI Act \citep{EUAIAct2024}. Thus, the provider adopts a controlled deployment strategy. Incoming applications constitute the system's \emph{inference traffic}: the stream of cases submitted to the system for processing. The provider initially assigns applications across the active variants, with the current version processing 75\%, the canary 10\%, and selected alternatives the remaining 15\%. Then, the first task is to describe \emph{all possible configurations}  that can be supported during testing. Here, a configuration specifies which variants are active and the proportion of inference traffic assigned to each. The provider then selects an initial configuration and allows it to evolve as monitoring evidence accumulates. Over time, the canary may receive a larger traffic share, one alternative may be withdrawn, and another may enter active testing after completing shadow evaluation. The complete deployment cannot be assessed solely by examining each variant separately. In fact, even when every active variant is individually acceptable, the functioning of the service depends on how applications are distributed among them. The provider must therefore assess both the state of each variant and the overall profile generated by their changing proportions. This entails answering the key questions: which combinations of variants are \emph{admissible}? 
% Can individually acceptable variants produce an unacceptable deployment when used together? 
How should a change in one variant or its traffic share affect the assessment of the complete service? Finally, how should the selected configuration and its evolution be represented, monitored, and compared over time?

%%%%%%%%%%%%%%%%%%%%%%%%%%%%%%%%%%%%%%%%%%%%%%%%%%%%%%%%%%%%%%
\subsection{The Problem: From AI Populations to Population Configurations...and Their Governance}
These two cases exemplify the problem of governing multiple AI system instantiations across deployments and over time. In the healthcare case, the relevant objects are instantiations of a \emph{common AI type}, while in the recruitment case, they are available \emph{variants} of an AI-based service. We call a finite collection of AI system instantiations that can be meaningfully identified and compared across deployments and over time an \emph{AI system population}. The examples also show that the members of an AI population occupy operational \emph{states}, namely, representations of system properties such as performance, robustness, safety, and transparency that are relevant for AI governance in the context where these instantiations are designed and deployed.\footnote{At this stage, we use the term \emph{state} in this general sense; Section~\ref{section:category_account} provides the formal representation.}
Here, several instantiations may occupy the same state, while the state occupied by an individual instantiation may change with operating environment or measured properties. An AI population can therefore realize different \emph{configurations of states} across deployments and over time. 
The configuration observed at any one moment is only one among those relevant to its lifecycle governance. 

In summary, our examples suggest that the AI population governance problem requires (1) formalizing AI populations, and (2) representing the state configurations these populations can possibly occupy across deployments and over time. Then, such representations should allow governance actors to monitor the evolution of these AI populations and trigger appropriate intervention. Following these intuitions, we address the problem in a two-step sequential approach:

\begin{itemize}
    \item \textbf{\StepI~(comparability and identity)}: establish criteria for determining which AI system instantiations can be meaningfully compared across deployments and over time as members of an AI population;

    \item \textbf{\StepII~(state compositionality and governance)}: construct a representation of the state configurations possibly occupied by AI populations and use it for monitoring and to support AI governance interventions.
\end{itemize}

% Note that the two steps are sequential.
% : a population  and its configurations are meaningful only after the domain of comparability of their instantiations has been fixed. 
Before developing them, however, we examine how regulation addresses this AI population governance problem.

%%%%%%%%%%%%%%%%%%%%%%%%%%%%%%%%%%%%%%%%%%%%%%%%%%%%%%
%%%%%%%%%%%%%%%%%%%%%%%%%%%%%%%%%%%%%%%%%%%%%%%%%%%%%%
\section{The Problem and AI Regulation}
\label{section:regulatory_motivation}
Current AI regulation remains predominantly organized around identifiable individual governance objects: \emph{the} AI system, \emph{the} service, \emph{the} unit, or \emph{the} device. 
% This is understandable as regulatory obligations need an object to which documentation, monitoring, modification, and corrective action can attach. However, regulation cannot entirely avoid relying on some account of what the system is, when it persists through change, and which concrete realizations belong together. This is one reason why metaphysics-inspired terminology and governance meet in this work.
The AIA exemplifies this single-system-centric approach: under Article 11 and Annex IV technical documentation tracks versions and changes of a high-risk system, Article~17 governs its quality management, Article~43 addresses its substantial modifications and conformity assessment, and Article~72 requires post-market monitoring throughout the system lifetime \citep{EUAIAct2024}. Yet multiplicity appears at the boundaries of the regulation. Article~79(4), for instance, may require corrective action with respect to ``all the AI systems concerned'' that an operator has made available on the Union market. The regulation does not, however, provide a general account of how these systems should be related, compared, or collected. This limitation is also visible in regulation (EU) 2026/1744 (``Digital Omnibus on AI'') \citep{EUAIActOmnibus2026}. 
Recital 39, explaining the amendment to the transitional regime under Article 111, distinguishes an AI system  from ``individual units of the same type and model.'' The provision keeps several units under the same regime while the high-risk AI system design remains unchanged,  acknowledging multiplicity within the same \emph{type} and \emph{model}, but without treating the units as a governance object. (Neither regulation defines \emph{type} or \emph{model}.)

Other regulatory regimes address similar issues. Under the Digital Services Act, systemic-risk assessment concerns a regulated ``service and its related systems, including algorithmic systems'' (Article~34), and Article~35 contemplates testing and adapting those systems \citep{EU_DSA2022}. Here, multiplicity is largely absorbed into a higher-level governance object---\emph{the service}---rather than represented explicitly as a collection of related algorithmic systems. A short-lived legislative precedent appeared in Colorado's SB24-205 \citep{ColoradoSB24205}, enacted in 2024 and repealed and reenacted with a different automated-decision-making regime by SB26-189 in 2026. SB24-205 allowed one impact assessment to address a ``comparable set'' of high-risk AI systems deployed by a deployer, while requiring review of individual deployed systems \citep{ColoradoSB24205}.

It is important to note that engineering and machine learning research \emph{does} address forms of system multiplicity. Model management systems track and compare model versions \citep{vartak2016modeldb,schelter2018challenges,hao2024mgit}, while MLOps organizes their development, deployment, and monitoring  \citep{sculley2015hidden,kreuzberger2023machine}. Controlled-experimentation and canary infrastructures operate concurrent variants and allocate production traffic among them \citep{tang2010overlapping,lindon2022rapid}, while physical AI research treats multiple deployed robots as a collective operational unit (called a \emph{fleet}) \citep{hoque2023fleet,wang2024robot}. 
These approaches govern AI system multiplicity for different operations, but do not provide a common lifecycle representation that jointly fixes the comparison domain and distinguishes changes in operational states, population support, and exposure. In summary, regulation and engineering practice already address AI system multiplicity for several purposes, but not through the common lifecycle representation required by the AI population governance problem defined here. We address it by developing our approach as follows.

%%%%%%%%%%%%%%%%%%%%%%%%%%%%%%%%%%%%%%%%%%%%%%%%%%%%%
%%%%%%%%%%%%%%%%%%%%%%%%%%%%%%%%%%%%%%%%%%%%%%%%%%%%%
\section{\StepI: An Account of AI Identity Across Space and Time}
\label{section:category_account}
We start with the trustworthiness-based and categorical accounts of AI identity introduced in \citep{Ferrario2025TrustworthinessMetaphysics,Ferrario2026CategoryIdentity} to address the \emph{meaningful} comparability and identity of AI systems. Here, we do not reproduce the full formalization and its categorical extension. Instead, we recall its primitives and a few constructs required for the sequel, and the weak and strong identity relations defined on them. The primitive components of the formalization are summarized in Table~\ref{table:formal_primitives} in the Appendix.

%%%%%%%%%%%%%%%%%%%%%%%%%%%%%%%%%%%%%%%%%%%%%%%%%%%%%%%%%%%
\vspace{1em}
\noindent\textbf{AI System Types.}
The starting point of the AI identity formalization is a fixed level of description at which system identity is assessed. This level of description is given by the AI system \emph{type}:

\begin{definition}[AI system type \citep{Ferrario2025TrustworthinessMetaphysics,Ferrario2026CategoryIdentity}]
\label{def:AI_type}
An \emph{AI system type datum} is a triple
\(\diam=(F,P,L_P)\), where \(F\) is a techno-function,
\(P=\{p_1,\ldots,p_n\}\) is a trustworthiness profile for systems
realizing \(F\), and \(L_P:I_P\to[K]\) is a
trustworthiness level function. Unless otherwise stated,
\(I_P=[0,1]^n\) and \([K]=\{1,\ldots,K\}\).
\end{definition}

The techno-function \(F\) specifies the technical capability at the chosen level of abstraction. For instance, in the medical-imaging case of Section~\ref{subsection:AI_healthcare_monitoring}, \(F\) may be the capability of detecting and localizing neurological lesions from magnetic resonance images. While the intended clinical use provides the context in which this capability is deployed, \(F\) identifies the technical capability whose correct realization is being assessed. The profile \(P\) specifies the trustworthiness dimensions relevant to that realization together with the measurement, normalization, evidential, and aggregation conventions through which they are evaluated \citep{ferrario2026methodology}---see Appendix~\ref{appendix:metric_affinity} for a concise exposition.\footnote{As different metrics for the same dimensions are not, in general, comparable, their choice is a key governance step. Moreover, the numerical representation fixed by \(P\) determines the geometry on which convex composition and its objects are defined. We comment on these points in Appendix~\ref{appendix:metric_affinity}.}
Here, we use \emph{AI trustworthiness} in a governance-oriented sense: an AI system is trustworthy to the extent that, for a given intended purpose and context of use, it satisfies the socio-technical requirements documented in its design and governance \citep{ferrario2026methodology}. For an overview of the extensive literature on \emph{Trustworthy AI} and trustworthiness dimensions, we refer to recent reviews \citep{mehrotra2026understanding,kemmerzell2025towards,kaur2022trustworthy,li2023trustworthy,diaz2023connecting} showing that, depending on the application, these dimensions may include predictive performance, robustness, safety, fairness, documentation transparency, explainability, cybersecurity, or human oversight. The quantifications of these dimensions according to the requirements specified by \(P\) are captured by \emph{the quantified profile} of an AI system, namely, a point \(q=(q_1,\ldots,q_n)\in I_P\). The function \(L_P\) assigns these quantified profiles to a finite collection of governance-relevant levels \citep{ferrario2026methodology}. It may, for instance, be a stepwise rule specified by inequalities on selected coordinates based on expert judgments or learned from historical data \citep{ferrario2026methodology}. We show three examples of \(L_P\) in Figure~\ref{figure:LP_examples_1d_2d_3d}. \(L_P\) becomes an AI governance instrument for summarizing the level of \emph{correct functioning} of an AI system, given the variability in performance across its different trustworthiness dimensions over time \citep{ferrario2026methodology} and a key concept for introducing AI identity \citep{Ferrario2025TrustworthinessMetaphysics,Ferrario2026CategoryIdentity}. As an AI audit object, it is validated either as an expert-based rule or empirically \citep{ferrario2026methodology}. Finally,  unlike \(F\), \(P\), and \(L_P\), the vectors \(q\) are \emph{not} part of the fixed datum \(\diam\): they may change across copies, deployments, variants, and times. This distinction will be important below: \(\diam\) fixes the AI identity framework, while \(q\) records where an AI system lies within that framework.

%%%%%%%%%%%%%%%%%%%%%%%%%%%%%%%%%%%%%%%%%%%%%%%%%%%
% ONE-, TWO-, AND THREE-DIMENSIONAL EXAMPLES OF L_P
%%%%%%%%%%%%%%%%%%%%%%%%%%%%%%%%%%%%%%%%%%%%%%%%%%%
\begin{figure}[t]
\centering

\tdplotsetmaincoords{20}{30}

\tikzset{
  paneltitle/.style={font=\small\bfseries},
  lab/.style={font=\scriptsize},
  axis3d/.style={-{Stealth[length=2.3mm]}, line width=0.8pt},
  axis3dsoft/.style={-{Stealth[length=1.8mm]}, line width=0.35pt, draw=black!45},
  x2lab/.style={font=\scriptsize, text=black!55},
  plateau/.style={draw=black, line width=0.5pt},
  cubeface/.style={draw=black, line width=0.45pt},
  levellab/.style={font=\scriptsize\bfseries},
  hiddenedge/.style={
    draw=black!55,
    line width=0.30pt,
    line cap=round
  }
}

%%%%%%%%%%%%%%%%%%%%%%%%%%%%%%%%%%%%%%%%%%%%%%%%
% GLOBAL PANEL TITLES
%%%%%%%%%%%%%%%%%%%%%%%%%%%%%%%%%%%%%%%%%%%%%%%%
\makebox[\textwidth][c]{%
\begin{minipage}{0.28\textwidth}
\centering
\small\bfseries (a)
\end{minipage}%
% \hspace{0.01\textwidth}%
\begin{minipage}{0.30\textwidth}
\centering
\small\bfseries (b)
\end{minipage}%
% \hspace{0.018\textwidth}%
\begin{minipage}{0.30\textwidth}
\centering
\small\bfseries (c)
\end{minipage}%
}

\vspace{-1em}

\makebox[\textwidth][c]{%
%%%%%%%%%%%%%%%%%%%%%%%%%%%%%%%%%%%%%%%%%%%%%%%%
% PANEL (a): 1D EXAMPLE
%%%%%%%%%%%%%%%%%%%%%%%%%%%%%%%%%%%%%%%%%%%%%%%%
\begin{minipage}[t]{0.28\textwidth}
\centering
\begin{tikzpicture}
\begin{axis}[
  width=0.92\linewidth,
  height=4.55cm,
  xmin=0, xmax=1,
  ymin=0.5, ymax=3.5,
  xlabel={$x_1$},
  ylabel={$L_P(x_1)$},
  xtick={0,0.57,0.84,1},
  ytick={1,2,3},
  yticklabels={$\mathbf{1}$,$\mathbf{2}$,$\mathbf{3}$},
  axis lines=left,
  tick align=outside,
  enlargelimits=false
]

% Step function
\addplot[line width=1pt] coordinates {(0.00,1) (0.57,1)};
\addplot[line width=1pt] coordinates {(0.57,2) (0.84,2)};
\addplot[line width=1pt] coordinates {(0.84,3) (1.00,3)};

% Mark transition points
\addplot[
  only marks,
  mark=*,
  mark size=1.5pt
] coordinates {
  (0.57,1)
  (0.84,2)
  (1.00,3)
};

% Region labels
\node[font=\scriptsize\bfseries]
  at (axis cs:0.28,1.18) {$\mathbf{1}$};

\node[font=\scriptsize\bfseries]
  at (axis cs:0.705,2.18) {$\mathbf{2}$};

\node[font=\scriptsize\bfseries]
  at (axis cs:0.92,3.18) {$\mathbf{3}$};

\end{axis}
\end{tikzpicture}
\end{minipage}%
\hspace{0.018\textwidth}%

%%%%%%%%%%%%%%%%%%%%%%%%%%%%%%%%%%%%%%%%%%%%%%%%
% PANEL (b): 2D EXAMPLE
%%%%%%%%%%%%%%%%%%%%%%%%%%%%%%%%%%%%%%%%%%%%%%%%
\begin{minipage}[t]{0.30\textwidth}
\centering

\begin{tikzpicture}[
  tdplot_main_coords,
  scale=2.55,
  line join=round,
  line cap=round
]

% Schematic heights corresponding to levels 1,2,3,4
\def\zOne{0.55}
\def\zTwo{1.00}
\def\zThr{1.45}
\def\zFou{1.90}
\def\zAxisTop{2.35}

% C1: x1 < 0.4
\filldraw[plateau,fill=black!25]
  (0,0,\zOne) --
  (0.4,0,\zOne) --
  (0.4,1,\zOne) --
  (0,1,\zOne) -- cycle;

% C2: 0.4 <= x1 < 0.7
\filldraw[plateau,fill=black!17]
  (0.4,0,\zTwo) --
  (0.7,0,\zTwo) --
  (0.7,1,\zTwo) --
  (0.4,1,\zTwo) -- cycle;

% C3: x1 >= 0.7 and x2 < 0.6
\filldraw[plateau,fill=black!10]
  (0.7,0,\zThr) --
  (1,0,\zThr) --
  (1,0.6,\zThr) --
  (0.7,0.6,\zThr) -- cycle;

% C4: x1 >= 0.7 and x2 >= 0.6
\filldraw[plateau,fill=black!4]
  (0.7,0.6,\zFou) --
  (1,0.6,\zFou) --
  (1,1,\zFou) --
  (0.7,1,\zFou) -- cycle;

% Region labels
\node[levellab] at (0.20,0.68,\zOne) {$\mathbf{1}$};
\node[levellab] at (0.55,0.70,\zTwo) {$\mathbf{2}$};
\node[levellab] at (0.85,0.28,\zThr) {$\mathbf{3}$};
\node[levellab] at (0.85,0.78,\zFou) {$\mathbf{4}$};

% Axes
\draw[axis3d]
  (0,0,0) -- (1.12,0,0)
  node[anchor=west] {$x_1$};

\draw[axis3dsoft]
  (0,0,0) -- (0,1.12,0)
  node[x2lab,anchor=south] {$x_2$};

\draw[axis3d]
  (0,0,0) -- (0,0,\zAxisTop)
  node[anchor=east] {$L_P(q)$};

\end{tikzpicture}
\end{minipage}%
\hspace{0.018\textwidth}%

%%%%%%%%%%%%%%%%%%%%%%%%%%%%%%%%%%%%%%%%%%%%%%%%
% PANEL (c): 3D EXAMPLE
%%%%%%%%%%%%%%%%%%%%%%%%%%%%%%%%%%%%%%%%%%%%%%%%

% Viewing angles for panel (c) only
\newcommand{\CubeTheta}{60}
\newcommand{\CubePhi}{40}
\tdplotsetmaincoords{\CubeTheta}{\CubePhi}

\begin{minipage}[t]{0.20\textwidth}
\centering

\begin{tikzpicture}[
  tdplot_main_coords,
  scale=2.3,
  line join=round,
  line cap=round
]

%%%%%%%%%%%%%%%%%%%%%%%%%%%%%%%%%%%%%%%%%%%%%%%%%%%%%
% Draw cuboids: back ones first, then front ones
% Visible faces used: y = ymin, x = xmax, z = zmax
%%%%%%%%%%%%%%%%%%%%%%%%%%%%%%%%%%%%%%%%%%%%%%%%%%%%%

%%%%%%%%%%%%%%%%%%%%%%%%%%%%%%%%%%%%%%%%%%%%%%%%%%%%%
% C2
% x1 in [0,0.7], x2 in [0.4,1], x3 free
%%%%%%%%%%%%%%%%%%%%%%%%%%%%%%%%%%%%%%%%%%%%%%%%%%%%%
\filldraw[cubeface,fill=black!18,opacity=0.75]
  (0,0.4,0) --
  (0.7,0.4,0) --
  (0.7,0.4,1) --
  (0,0.4,1) -- cycle;

\filldraw[cubeface,fill=black!18,opacity=0.75]
  (0.7,0.4,0) --
  (0.7,1,0) --
  (0.7,1,1) --
  (0.7,0.4,1) -- cycle;

\filldraw[cubeface,fill=black!18,opacity=0.75]
  (0,0.4,1) --
  (0.7,0.4,1) --
  (0.7,1,1) --
  (0,1,1) -- cycle;

\node[levellab]
  at (0.30,0.72,0.94)
  {$\mathbf{2}$};

%%%%%%%%%%%%%%%%%%%%%%%%%%%%%%%%%%%%%%%%%%%%%%%%%%%%%
% C4
% x1 in [0.7,1], x2 in [0.4,1], x3 in [0,0.8]
%%%%%%%%%%%%%%%%%%%%%%%%%%%%%%%%%%%%%%%%%%%%%%%%%%%%%
\filldraw[cubeface,fill=black!10,opacity=0.78]
  (0.7,0.4,0) --
  (1,0.4,0) --
  (1,0.4,0.8) --
  (0.7,0.4,0.8) -- cycle;

\filldraw[cubeface,fill=black!10,opacity=0.78]
  (1,0.4,0) --
  (1,1,0) --
  (1,1,0.8) --
  (1,0.4,0.8) -- cycle;

\filldraw[cubeface,fill=black!10,opacity=0.78]
  (0.7,0.4,0.8) --
  (1,0.4,0.8) --
  (1,1,0.8) --
  (0.7,1,0.8) -- cycle;

\node[levellab]
  at (0.88,0.70,0.38)
  {$\mathbf{4}$};

%%%%%%%%%%%%%%%%%%%%%%%%%%%%%%%%%%%%%%%%%%%%%%%%%%%%%
% C6
% x1 in [0.7,1], x2 in [0.4,1], x3 in [0.8,1]
%%%%%%%%%%%%%%%%%%%%%%%%%%%%%%%%%%%%%%%%%%%%%%%%%%%%%
\filldraw[cubeface,fill=black!3,opacity=0.82]
  (0.7,0.4,0.8) --
  (1,0.4,0.8) --
  (1,0.4,1) --
  (0.7,0.4,1) -- cycle;

\filldraw[cubeface,fill=black!3,opacity=0.82]
  (1,0.4,0.8) --
  (1,1,0.8) --
  (1,1,1) --
  (1,0.4,1) -- cycle;

\filldraw[cubeface,fill=black!3,opacity=0.82]
  (0.7,0.4,1) --
  (1,0.4,1) --
  (1,1,1) --
  (0.7,1,1) -- cycle;

\node[levellab]
  at (0.88,0.70,0.99)
  {$\mathbf{6}$};

%%%%%%%%%%%%%%%%%%%%%%%%%%%%%%%%%%%%%%%%%%%%%%%%%%%%%
% C1
% x1 in [0,0.7], x2 in [0,0.4], x3 free
%%%%%%%%%%%%%%%%%%%%%%%%%%%%%%%%%%%%%%%%%%%%%%%%%%%%%
\filldraw[cubeface,fill=black!26,opacity=0.75]
  (0,0,0) --
  (0.7,0,0) --
  (0.7,0,1) --
  (0,0,1) -- cycle;

\filldraw[cubeface,fill=black!26,opacity=0.75]
  (0.7,0,0) --
  (0.7,0.4,0) --
  (0.7,0.4,1) --
  (0.7,0,1) -- cycle;

\filldraw[cubeface,fill=black!26,opacity=0.75]
  (0,0,1) --
  (0.7,0,1) --
  (0.7,0.4,1) --
  (0,0.4,1) -- cycle;

\node[levellab]
  at (0.30,0.12,0.50)
  {$\mathbf{1}$};

%%%%%%%%%%%%%%%%%%%%%%%%%%%%%%%%%%%%%%%%%%%%%%%%%%%%%
% C3
% x1 in [0.7,1], x2 in [0,0.4], x3 in [0,0.8]
%%%%%%%%%%%%%%%%%%%%%%%%%%%%%%%%%%%%%%%%%%%%%%%%%%%%%
\filldraw[cubeface,fill=black!12,opacity=0.80]
  (0.7,0,0) --
  (1,0,0) --
  (1,0,0.8) --
  (0.7,0,0.8) -- cycle;

\filldraw[cubeface,fill=black!12,opacity=0.80]
  (1,0,0) --
  (1,0.4,0) --
  (1,0.4,0.8) --
  (1,0,0.8) -- cycle;

\filldraw[cubeface,fill=black!12,opacity=0.80]
  (0.7,0,0.8) --
  (1,0,0.8) --
  (1,0.4,0.8) --
  (0.7,0.4,0.8) -- cycle;

\node[levellab]
  at (0.75,0.12,0.35)
  {$\mathbf{3}$};

%%%%%%%%%%%%%%%%%%%%%%%%%%%%%%%%%%%%%%%%%%%%%%%%%%%%%
% C5
% x1 in [0.7,1], x2 in [0,0.4], x3 in [0.8,1]
%%%%%%%%%%%%%%%%%%%%%%%%%%%%%%%%%%%%%%%%%%%%%%%%%%%%%
\filldraw[cubeface,fill=black!5,opacity=0.83]
  (0.7,0,0.8) --
  (1,0,0.8) --
  (1,0,1) --
  (0.7,0,1) -- cycle;

\filldraw[cubeface,fill=black!5,opacity=0.83]
  (1,0,0.8) --
  (1,0.4,0.8) --
  (1,0.4,1) --
  (1,0,1) -- cycle;

\filldraw[cubeface,fill=black!5,opacity=0.83]
  (0.7,0,1) --
  (1,0,1) --
  (1,0.4,1) --
  (0.7,0.4,1) -- cycle;

\node[levellab]
  at (0.88,0.12,0.99)
  {$\mathbf{5}$};

%%%%%%%%%%%%%%%%%%%%%%%%%%%%%%%%%%%%%%%%%%%%%%%%%%%%%
% MISSING / HIDDEN REAR EDGES
%%%%%%%%%%%%%%%%%%%%%%%%%%%%%%%%%%%%%%%%%%%%%%%%%%%%%

% Rear-bottom edge behind C2
\draw[hiddenedge]
  (0,1,0) -- (0.7,1,0);

% Rear-bottom edge behind C4
\draw[hiddenedge]
  (0.7,1,0) -- (1,1,0);

% Far rear-left vertical outer edge
\draw[hiddenedge]
  (0,1,0) -- (0,1,1);

%%%%%%%%%%%%%%%%%%%%%%%%%%%%%%%%%%%%%%%%%%%%%%%%%%%%%
% AXES
%%%%%%%%%%%%%%%%%%%%%%%%%%%%%%%%%%%%%%%%%%%%%%%%%%%%%
\draw[axis3d]
  (0,0,0) -- (1.12,0,0)
  node[anchor=west] {$x_1$};

\draw[axis3dsoft]
  (0,0,0) -- (0,1.12,0)
  node[x2lab,anchor=south] {$x_2$};

\draw[axis3d]
  (0,0,0) -- (0,0,1.15)
  node[anchor=east] {$x_3$};

\end{tikzpicture}
\end{minipage}%
}

\caption{
Examples of functions \(L_P\).
(a) A one-dimensional stepwise example with profile coordinate \(x_1\) and thresholds at \(0.57\) and \(0.84\).
(b) A two-dimensional example represented as a stepwise-constant graph over \([0,1]^2\), with four rectangular level regions.
(c) A three-dimensional example represented by the partition of the cube \([0,1]^3\) into six rectangular level regions \(C_1,\dots,C_6\).
}
\label{figure:LP_examples_1d_2d_3d}
\end{figure}
%%%%%%%%%%%%%%%%%%%%%%%%%%%%%%%%%%%%%%%%%%%%%%%%%%%

%%%%%%%%%%%%%%%%%%%%%%%%%%%%%%%%%%%%%%
\vspace{1em}
\noindent\textbf{Abstract AI System States}
% \label{subsection:Abstract_AI_system_states}
The quantified profile \(q\) becomes a key concept of the categorical extension of the AI identity formalization by pairing it with its image under \(L_P\). The results of this pairing are the AI system states:

\begin{definition}[\(\diam\)-relative AI system states, \citep{Ferrario2026CategoryIdentity}]
\label{def:profile_state}
For a fixed type datum \(\diam=(F,P,L_P)\), the \(\diam\)-relative AI state space is \(S_\diam:=\{(q,L_P(q))\mid q\in I_P\}\).
For a state \(x=(q_x,L_P(q_x))\in S_\diam\), define its level by
\(\tau_\diam(x):=L_P(q_x)\). For \(k\in[K]\), let 
\[
C_k:=L_P^{-1}(k)
=
\{q\in I_P:L_P(q)=k\},
\quad
S_{\diam,k}:=\{(q,k):q\in C_k\}.
\] 
\end{definition}
Then, \emph{AI states are abstract representations of the possible quantifications of trustworthiness profile \(P\) and the corresponding trustworthiness levels that AI systems of a given type \(\diam\) may occupy across deployments and over time}. In general, these states could be occupied by different AI system instantiations---or by none---such as in the case of AI systems deployed in different hospitals, using different software installations, or models, whenever despite these differences they receive the same quantified assessment under \(P\) at the selected level of precision. Conversely, one AI system instantiation may occupy different states as its operating conditions evolve over time. Each fiber \(S_{\diam,k}\) collects states \(x\in S_\diam\) that agree at the trustworthiness level while differing in their quantified profiles \(q_x\).

%%%%%%%%%%%%%%%%%%%%%%%%%%%%%%%%%%%%%%%%%%%%%%%%%%%
\vspace{1em}
\noindent\textbf{AI system transformations and trustworthiness-preserving reachability.}
% \label{subsection:Transformations_AI_states}
To represent change of states and systems, fix a vocabulary \(\mathcal U_\diam\) of admissible primitive lifecycle transformations between AI states in \(S_\diam\) \citep{Ferrario2026CategoryIdentity}. These may be
\emph{intervention-driven}, such as retraining, recalibration, data refresh, rollback, model replacement, or changes to human oversight, or \emph{environmentally induced}, such as distribution shift, changes in populations, sensors, data pipelines, workflows, infrastructure, or dependencies. The distinction concerns the provenance of a transformation and does not determine its direction or effect on the
trustworthiness profile. Each \(r\in\mathcal U_\diam\) is represented by a relation
\(R_r\subseteq S_\diam\times S_\diam,\) where \(xR_ry\) means that \(y\) is an admissible state outcome of a primitive transformation of type \(r\) starting from \(x\) \citep{Ferrario2026CategoryIdentity}. The term
\emph{primitive} is relative to the selected level of description: for instance, retraining may constitute one transformation in a coarse model and several transformations in a finer one. \emph{Admissibility} is likewise context dependent and may reflect technical feasibility, available data and resources or governance requirements \citep{Ferrario2026CategoryIdentity}. Panel~(b) of Figure~\ref{figure:datum_state_sys} illustrates this relational structure. Primitive transformations generate finite \emph{lifecycle paths}
\(
x=x_0\xrightarrow{r_1}x_1
\xrightarrow{r_2}\cdots
\xrightarrow{r_m}x_m=y,\)
\((x_{i-1},x_i)\in R_{r_i}.\)
If there exists at least one path from \(x\) to \(y\), then we say that \(y\) is \emph{reachable} from \(x\). A path is \emph{trustworthiness preserving} when all its states belong to one common trustworthiness level fiber. This is an important definition:

\begin{definition}[Trustworthiness-preserving reachability, \citep{Ferrario2026CategoryIdentity}]
\label{def:lifecycle_reachability}
For \(x,y\in S_\diam\), write
\(x\preceq_\tau y\)
when there exists a trustworthiness-preserving lifecycle path from \(x\)
to \(y\). The relation \(\preceq_\tau\) is a preorder: reflexivity is
given by the empty path and transitivity by path concatenation.
\end{definition}

Thus, \(\preceq_\tau\) records that at least one admissible lifecycle route connects
the two states while preserving their trustworthiness level throughout. Importantly, \emph{reachability is non-trivial}: two states in the same trustworthiness level fiber need not be connected by any admissible lifecycle path. Whether \(x\preceq_\tau y\) holds depends jointly on the primitive transformation vocabulary \(\mathcal U_\diam\), the constraints encoded by its relations \(R_r\), and the existence of a finite path that remains within one trustworthiness level.\footnote{Institutional constraints may also delimit feasible operating regions within a single fiber \(C_k\), reflecting, for instance, provider-defined and validated deployment regimes. Even states sharing the same trustworthiness level need not be connected by an admissible lifecycle path---see  Appendix~\ref{appendix:reachability_example}.} We give an explicit example in Appendix~\ref{appendix:reachability_example}. The categorification of this idea of reachability is: 
\begin{definition}[AI system state category, \citep{Ferrario2026CategoryIdentity}]
\label{def:state_category}
The category \(\Sys_\diam\) is the thin category associated with  \((S_\diam,\preceq_\tau)\):
\[
\Hom_{\Sys_\diam}(x,y)
=
\begin{cases}
\{\ast_{x,y}\}, & x\preceq_\tau y,\\
\varnothing, & \text{otherwise}.
\end{cases}
\]
\end{definition}

The unique morphism \(\ast_{x,y}\) retains only the existence and direction of trustworthiness-preserving reachability. It forgets the number and labels of transformations, intermediate states, path length, and lifecycle provenance. \(\Sys_\diam\) simply records whether one state can reach another while preserving trustworthiness levels; it does not record the particular lifecycle path taken or whether any concrete AI system actually followed that path.  Figure~\ref{figure:datum_state_sys} summarizes graphically the construction of \(\Sys_\diam\) from \((S_\diam,\preceq_\tau)\) and \(\diam\). We close
\StepI~ by discussing AI identity \citep{Ferrario2026CategoryIdentity}.

\vspace{1em}
\noindent\textbf{Weak and strong AI state identity.}
State reachability encoded within the category \(\Sys_\diam\) allows us to introduce two different levels of AI identity \citep{Ferrario2026CategoryIdentity} that replicate and generalize the identity criteria expressed in propositional form in \citep{Ferrario2025TrustworthinessMetaphysics}.

\begin{definition}[Weak and strong AI identity, \citep{Ferrario2026CategoryIdentity}]
\label{def:weak_strong_identity}
For \(x,y\in S_\diam\), define
\begin{align*}
&x\simeq^{\mathrm{wk}}_\diam y
\Longleftrightarrow
\tau_\diam(x)=\tau_\diam(y), \quad\text{(weak AI identity)}\\
&x\simeq^{\mathrm{str}}_\diam y\Longleftrightarrow x\preceq_\tau y\ \text{and}\
y\preceq_\tau x.\quad\text{(strong AI identity)}
\end{align*}
Equivalently,
\(x\simeq^{\mathrm{str}}_\diam y
\Longleftrightarrow x\cong_{\Sys_\diam}y,\) that is, strong AI identity is isomorphism in \(\Sys_\diam\).
\end{definition}

Weak identity of AI states in \(\Sys_\diam\) requires only membership in the same trustworthiness level fiber: all states in any \(S_{\diam,k},\) \(k\in[K],\) are weakly identical. Strong identity of AI states additionally requires mutual trustworthiness-preserving reachability. This distinction will matter for AI composability, as we will see in Section~\ref{section:convex_composition}.
% \footnote{The full categorical account in \citep{Ferrario2026CategoryIdentity} also includes history functors; here we use only \(S_\diam\), \(\preceq_\tau\), \(\Sys_\diam\), and the resulting identity relations.} 
In summary, \StepI~ fixes the comparison domain within which an AI system population can be defined. AI instantiations assessed under a common datum \(\diam\) can be represented as states in \(S_\diam\), while weak and strong identity specify different forms of identity preservation across those states and their lifecycle changes. 

%%%%%%%%%%%%%%%%%%%%%%%%%%%%%%%%%%%%%%%%%%%%%%%%
% TYPE DATUM, STATE SPACE, AND THIN CATEGORY
%%%%%%%%%%%%%%%%%%%%%%%%%%%%%%%%%%%%%%%%%%%%%%%%
% Requires in the preamble:
% \usetikzlibrary{decorations.markings}

\begin{figure}[h!]
\centering

%%%%%%%%%%%%%%%%%%%%%%%%%%%%%%%%%%%%%%%%%%%%%%%%%%%%%
% USER CONTROL: vertical compression only
%%%%%%%%%%%%%%%%%%%%%%%%%%%%%%%%%%%%%%%%%%%%%%%%%%%%%
\def\StepIYScale{1.00}   % 1.00 = original height; 0.78 = ~22% shorter

\makebox[\textwidth][c]{%
\begin{tikzpicture}[
    >=Latex,
    font=\small,
    line cap=round,
    line join=round,
    paneltitle/.style={font=\small\bfseries},
    box/.style={draw, rounded corners=2pt, line width=0.6pt, inner sep=3pt},
    arr/.style={-{Latex[length=2.0mm,width=1.0mm]}, line width=0.10pt},
    rel/.style={-{Latex[length=2mm,width=1.0mm]}, line width=0.3pt},
    sysrel/.style={
        draw=black,
        line width=1.6pt,
        postaction={decorate},
        decoration={
            markings,
            mark=at position 0.55 with
            {\arrow{Latex[length=2.3mm,width=1.6mm]}}
        }
    },
    pt/.style={circle, fill=black, inner sep=1.2pt},
    lab/.style={font=\small},
    fiblab/.style={font=\small},
    plane/.style={draw=black, line width=0.2pt, dotted},
    sep/.style={draw=black!55, line width=0.35pt}
]

%%%%%%%%%%%%%%%%%%%%%%%%%%%%%%%%%%%%%%%%%%%%%%%%
% PANEL TITLES
%%%%%%%%%%%%%%%%%%%%%%%%%%%%%%%%%%%%%%%%%%%%%%%%
\node[paneltitle] at (1.35,{\StepIYScale*5.20}) {(a) Type datum};
\node[paneltitle] at (6.45,{\StepIYScale*5.20}) {(b) State space $S_\diam$};
\node[paneltitle] at (13.95,{\StepIYScale*5.20}) {(c) Thin category $\Sys_\diam$};

%%%%%%%%%%%%%%%%%%%%%%%%%%%%%%%%%%%%%%%%%%%%%%%%
% VERTICAL SEPARATORS
%%%%%%%%%%%%%%%%%%%%%%%%%%%%%%%%%%%%%%%%%%%%%%%%
\draw[sep]
  (2.85,{\StepIYScale*0.45})
  --
  (2.85,{\StepIYScale*4.10});

\draw[sep]
  (10.70,{\StepIYScale*0.45})
  --
  (10.70,{\StepIYScale*4.10});

%%%%%%%%%%%%%%%%%%%%%%%%%%%%%%%%%%%%%%%%%%%%%%%%
% PANEL 1: TYPE DATUM
%%%%%%%%%%%%%%%%%%%%%%%%%%%%%%%%%%%%%%%%%%%%%%%%
\begin{scope}[shift={(0,0)},scale=0.86,transform shape]

    \node[font=\large] (diamond)
        at (1.8,{\StepIYScale*2.85})
        {$\diam=(F,P,L_P)$};

\end{scope}

%%%%%%%%%%%%%%%%%%%%%%%%%%%%%%%%%%%%%%%%%%%%%%%%
% PANEL 2: S_diam
%%%%%%%%%%%%%%%%%%%%%%%%%%%%%%%%%%%%%%%%%%%%%%%%
\begin{scope}[shift={(2.9,0)},scale=1.10,transform shape]

    % planes
    \coordinate (A1) at (0.4,{\StepIYScale*0.85});
    \coordinate (B1) at (4.2,{\StepIYScale*0.85});
    \coordinate (C1) at (5.0,{\StepIYScale*1.35});
    \coordinate (D1) at (1.2,{\StepIYScale*1.35});

    \coordinate (A2) at (1.2,{\StepIYScale*2.05});
    \coordinate (B2) at (5.0,{\StepIYScale*2.05});
    \coordinate (C2) at (5.8,{\StepIYScale*2.55});
    \coordinate (D2) at (2.0,{\StepIYScale*2.55});

    \coordinate (A3) at (2.0,{\StepIYScale*3.25});
    \coordinate (B3) at (5.8,{\StepIYScale*3.25});
    \coordinate (C3) at (6.6,{\StepIYScale*3.75});
    \coordinate (D3) at (2.8,{\StepIYScale*3.75});

    \draw[plane] (A1)--(B1)--(C1)--(D1)--cycle;
    \draw[plane] (A2)--(B2)--(C2)--(D2)--cycle;
    \draw[plane] (A3)--(B3)--(C3)--(D3)--cycle;

    % fiber labels
    \node[fiblab] at (4.9,{\StepIYScale*0.93})
        {$S_{\diam,1}$};

    \node[fiblab] at (6.00,{\StepIYScale*2.22})
        {$S_{\diam,K-1}$};

    \node[fiblab] at (6.5,{\StepIYScale*3.33})
        {$S_{\diam,K}$};

    \node[lab] at (4.80,{\StepIYScale*1.80})
        {$\vdots$};

    % topic label
    \node[lab, anchor=east]
        at (0.70,{\StepIYScale*2.22})
        {$S_\diam$};

    % states
    \coordinate (x1) at (1.55,{\StepIYScale*1.02});
    \coordinate (x2) at (3.10,{\StepIYScale*1.15});
    \coordinate (x3) at (3.55,{\StepIYScale*2.28});
    \coordinate (x4) at (5.55,{\StepIYScale*3.50});
    \coordinate (x5) at (3.50,{\StepIYScale*3.67});

    \node[pt,label={[lab,left=-1pt]$x_1$}] at (x1) {};
    \node[pt,label={[lab,below right=-1pt]$x_2$}] at (x2) {};
    \node[pt,label={[lab,left=1pt]$x_3$}] at (x3) {};
    \node[pt,label={[lab,right]$x_4$}] at (x4) {};
    \node[pt,label={[lab,below left]$x_5$}] at (x5) {};

    % primitive relations
    \draw[rel] (x1) --
        node[lab,above left=1.5pt] {$r_{12}$} (x2);

    \draw[rel] (x1) to[bend left=18]
        node[lab,above right=-1pt] {$r_{12}'$} (x2);

    \draw[rel] (x1) to[bend right=18]
        node[lab,below] {$r_{12}''$} (x2);

    \draw[rel] (x2) --
        node[lab,right] {$r_{23}$} (x3);

    \draw[rel] (x3) --
        node[lab,left=2pt] {$r_{34}$} (x4);

    \draw[rel] (x4) to[bend left=18]
        node[lab,right=4pt] {$r_{43}$} (x3);

    \draw[rel] (x5) --
        node[lab,above=-2pt] {$r_{54}$} (x4);

    \draw[rel] (x4) to[bend left=22]
        node[lab,above=-2.5pt] {$r_{45}$} (x5);

\end{scope}

%%%%%%%%%%%%%%%%%%%%%%%%%%%%%%%%%%%%%%%%%%%%%%%%
% PANEL 3: Sys_diam
%%%%%%%%%%%%%%%%%%%%%%%%%%%%%%%%%%%%%%%%%%%%%%%%
\begin{scope}[shift={(11.2,0)},scale=1.10,transform shape]

    % topic label
    \node[lab, anchor=east]
        at (0.50,{\StepIYScale*2.22})
        {$\Sys_\diam$};

    % states
    \coordinate (y1) at (0.10,{\StepIYScale*1.08});
    \coordinate (y2) at (1.40,{\StepIYScale*1.18});
    \coordinate (y3) at (1.90,{\StepIYScale*2.40});
    \coordinate (y5) at (1.80,{\StepIYScale*3.67});
    \coordinate (y4) at (4.05,{\StepIYScale*3.58});

    \node[pt,label={[lab,below left=-1pt]$x_1$}] at (y1) {};
    \node[pt,label={[lab,below right=-1pt]$x_2$}] at (y2) {};
    \node[pt,label={[lab,left=-1pt]$x_3$}] at (y3) {};
    \node[pt,label={[lab,left=-1pt]$x_5$}] at (y5) {};
    \node[pt,label={[lab,right=-1pt]$x_4$}] at (y4) {};

    \node[lab, anchor=east]
        at (1.30,{\StepIYScale*1.50})
        {\(x_1\preceq_\tau x_2\)};

    \node[lab, anchor=east]
        at (3.6,{\StepIYScale*4.20})
        {\(x_5\simeq^{\mathrm{str}}_\diam x_4\)};

    % thin-category arrows
    \draw[sysrel] (y1) -- (y2);
    \draw[sysrel] (y5) to[bend left=16] (y4);
    \draw[sysrel] (y4) to[bend left=16] (y5);

\end{scope}

\end{tikzpicture}%
}

\caption{
Diagrammatic summary of the categorical construction in
\citep{Ferrario2026CategoryIdentity}.
(a) The type datum \(\diam=(F,P,L_P)\) specifies the techno-function,
trustworthiness profile, and trustworthiness level function.
(b) The state space \(S_\diam\) is partitioned into level fibers
\(S_{\diam,k}\), with primitive lifecycle transformations represented by
relations between states, possibly across fibers.
(c) The thin category \(\Sys_\diam\) retains only
trustworthiness-preserving reachability; the thicker arrows indicate the
structure underlying weak and strong AI state identity.
}
\label{figure:datum_state_sys}
\end{figure}

%%%%%%%%%%%%%%%%%%%%%%%%%%%%%%%%%%%%%%%%%%%%%%%%
%%%%%%%%%%%%%%%%%%%%%%%%%%%%%%%%%%%%%%%%%%%%%%%%
\section{\StepII: Convex Composition of AI System States}
\label{section:convex_composition}
We can now address the second and novel step of our approach: how
configurations of AI states can be composed into AI population governance objects. The construction operates on the quantified profiles \(q\in I_P\) supplied by the protocol \(P\) discussed in \StepI. The protocol \(P\) needs to provide numerical coordinates on which such composition is defined; therefore, discrete or ordinal measures require an appropriate numerical representation in \(I_P\). For the interested reader, Appendix~\ref{appendix:metric_affinity} summarizes representative
trustworthiness dimensions and metrics from major surveys and reviews
\citep{kemmerzell2025towards,kaur2022trustworthy,li2023trustworthy,diaz2023connecting}, together with recent work on AI-agent reliability
\citep{rabanser2026towards}.

%%%%%%%%%%%%%%%%%%%%%%%%%%%%%%%%%%%%%%%%%%%%%%%%%%%%%
\subsection{Convex Spaces,  Reachability, and AI Identity Congruence}
\label{subsection:convex_spaces}
\begin{definition}[Convex space]
A \emph{convex space}, equivalently a \emph{barycentric algebra}
\citep{fritz2009convex,zamojska2024barycentric,Stone1949Barycentric}, is a set \(A\)
equipped with binary operations
\(\omega_\alpha:A\times A\to A,\
\alpha\in[0,1],\)
satisfying \(\omega_1(x,y)=x\)\footnote{By skew-commutativity, \(\omega_1(x,y)=x\) implies \(\omega_0(x,y)=\omega_1(y,x)=y\).} and
\begin{align}
&\omega_\alpha(x,x)=x,\quad\text{(idempotency)}\label{eq:bar_idem}\\
&\omega_\alpha(x,y)=\omega_{1-\alpha}(y,x),\quad\text{(skew-commutativity)}\\
&\omega_\beta(\omega_\alpha(x,y),z)
=\omega_{\alpha\beta}
\left(x,\omega_{\frac{\beta(1-\alpha)}
{1-\alpha\beta}}(y,z)
\right),
\quad \alpha\beta\neq1.\quad\text{(skew-associativity)}\label{eq:bar_skew_ass}
\end{align}
% \footnotetext{If \(\alpha\beta=1\), then \(\alpha=\beta=1,\) \(\alpha,\beta\in[0,1]\). In this case, \(\omega_1(x,\omega_\gamma(y,z))=x\) for any \(\gamma\in[0,1]\).}
\end{definition}

The canonical example of a convex space is a convex subset \(C\subseteq V\) of a real vector space \(V\),
with binary operations \(\omega^{C}_\alpha(x,y):=\alpha x+(1-\alpha)y,\) \(x,y\in C\) and \(\alpha\in[0,1].\) Conditions~\eqref{eq:bar_idem}--\eqref{eq:bar_skew_ass} easily follow using the definition of \(\omega^{C}_\alpha\). While Appendix~\ref{appendix:barycentric_basics} introduces other examples of convex spaces, here we apply convexity to AI states straightforwardly:

\begin{proposition}[Convex composition of AI states]
\label{prop:convex_comp_states}
Let \(x,y\in S_{\diam,k}\). If \(C_k\) is convex, define
\begin{align}
\omega_{\alpha,k}(x,y)
:=
\bigl(\alpha q_x+(1-\alpha)q_y,k\bigr),
\qquad \alpha\in[0,1].\label{eq:convex_comp_states}
\end{align}
Then \(\omega_{\alpha,k}(x,y)\in S_{\diam,k}\) for every \(\alpha\in[0,1]\), and the family \(\{\omega_{\alpha,k}\}_{\alpha\in[0,1]}\) equips
\(S_{\diam,k}\) with a convex space structure. Consequently,
\(
\omega_{\alpha,k}(x,y)
\simeq^{\mathrm{wk}}_\diam x
\simeq^{\mathrm{wk}}_\diam y.
\)
\end{proposition}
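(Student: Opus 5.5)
The plan is to transport the canonical convex structure of \(C_k\) to \(S_{\diam,k}\) through the projection \(x\mapsto q_x\), and then read off weak identity from the level coordinate.

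\textbf{Closure.} Fix \(x=(q_x,k)\) and \(y=(q_y,k)\) in \(S_{\diam,k}\), so that \(q_x,q_y\in C_k\subseteq I_P\). Because \(C_k\) is convex, \(q:=\alpha q_x+(1-\alpha)q_y\in C_k\) for every \(\alpha\in[0,1]\). By the definition of \(C_k=L_P^{-1}(k)\), this gives \(q\in I_P\) and \(L_P(q)=k\). Hence \((q,k)=(q,L_P(q))\in S_\diam\), and in fact \((q,k)\in S_{\diam,k}\). So \(\omega_{\alpha,k}\) is a well-defined binary operation on \(S_{\diam,k}\).

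\textbf{Convex space axioms.} The map \(\pi:S_{\diam,k}\to C_k\), \((q,k)\mapsto q\), is a bijection, with inverse \(q\mapsto(q,k)\). By construction, \(\omega_{\alpha,k}(x,y)=\pi^{-1}\bigl(\omega^{C_k}_\alpha(\pi x,\pi y)\bigr)\). Each axiom is an equation between terms built from the operations, so it transfers along this bijection. It therefore suffices to check the axioms for \(\omega^{C_k}_\alpha(u,v)=\alpha u+(1-\alpha)v\) on \(C_k\subseteq\mathbb R^n\).

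\begin{itemize}
\item \(\omega_1(u,v)=u\) and idempotency are immediate.
\item Skew-commutativity is a relabeling of the coefficients.
\item For skew-associativity with \(\alpha\beta\neq1\), expand both sides. The left side is \(\beta(\alpha u+(1-\alpha)v)+(1-\beta)w\). The right side is \(\alpha\beta u+(1-\alpha\beta)\bigl(\gamma v+(1-\gamma)w\bigr)\) with \(\gamma=\beta(1-\alpha)/(1-\alpha\beta)\). The coefficients of \(v\) agree, namely \(\beta(1-\alpha)\). The coefficients of \(w\) agree because \(1-\alpha\beta-\beta+\alpha\beta=1-\beta\).
\end{itemize}

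I also need \(\gamma\in[0,1]\). This holds because \(\beta(1-\alpha)\le 1-\alpha\beta\) is equivalent to \(\beta\le1\).

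\textbf{Weak identity.} By Definition~\ref{def:profile_state}, \(\tau_\diam(\omega_{\alpha,k}(x,y))=L_P(q)=k=\tau_\diam(x)=\tau_\diam(y)\). Definition~\ref{def:weak_strong_identity} then gives the stated chain of weak identities.

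\textbf{Main obstacle.} I expect none of this to be conceptually hard. The only step needing care is closure: confirming that the second coordinate \(k\) is consistent, so that the composed pair really lies in \(S_\diam\) and not merely in \(I_P\times[K]\). That consistency is exactly what convexity of the fiber \(C_k\) provides.

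The proposition asserts only weak identity. I will not claim strong identity, since \(\preceq_\tau\) depends on \(\mathcal U_\diam\) and need not be compatible with convex combinations.
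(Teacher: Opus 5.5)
Your proposal is correct and follows the paper's proof. Closure comes from convexity of \(C_k=L_P^{-1}(k)\), the convex-space identities are inherited from ordinary convex combinations in \(\mathbb R^n\), and weak identity is read off from the shared level \(k\); you also spell out what the paper leaves implicit, namely the transport along the bijection \(q\mapsto(q,k)\), the skew-associativity coefficient check, and the verification that \(\gamma\in[0,1]\).
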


\begin{proof}
Convexity gives
\(\alpha q_x+(1-\alpha)q_y\in C_k\), hence
\(\bigl(\alpha q_x+(1-\alpha)q_y,k\bigr)\in S_{\diam,k}\), as \(C_k:=L^{-1}_P(k)\). The  identities~\eqref{eq:bar_idem}--\eqref{eq:bar_skew_ass} follow from the corresponding identities for convex combinations in \(\mathbb R^n\).
\end{proof}

Convexity of the preimage sets \(C_k\) of \(L_P\) provides closure of state compositions at a fixed trustworthiness level; the maps \(\{\omega_{\alpha,k}\}_\alpha\) grant the \emph{convex compositionality of AI states}. To study the compatibility of convex compositionality of AI states with the AI identity relations of Definition~\ref{def:weak_strong_identity}, we need to introduce the following definition and simple result.

\begin{definition}[Reachability-monotone composition]
\label{def:reach_monotone}
The convex space structure on \(S_{\diam,k}\) is \emph{reachability monotone} when
for every \(\alpha\in[0,1], x_1\preceq_\tau y_1, x_2\preceq_\tau y_2\) implies \(\omega_{\alpha,k}(x_1,x_2)
\preceq_\tau
\omega_{\alpha,k}(y_1,y_2).
\)
\end{definition}

\begin{proposition}
\label{prop:identity_compatible_composition}
Assume that \(C_k\) is convex and the compositions \(\{\omega_{\alpha,k}\}_\alpha\) on \(S_{\diam,k}\) are reachability monotone for all \(\alpha\in[0,1].\) For
\(x_1,y_1,x_2,y_2\in S_{\diam,k}\) if \(x_1\simeq^{\mathrm{str}}_\diam y_1,~x_2\simeq^{\mathrm{str}}_\diam y_2,\) then \(\omega_{\alpha,k}(x_1,x_2)\simeq^{\mathrm{str}}_\diam\omega_{\alpha,k}(y_1,y_2)\)
for every \(\alpha\in[0,1]\). Hence strong identity is a congruence for
the compositions \(\{\omega_{\alpha,k}\}_\alpha\) on \(S_{\diam,k}.\)
\end{proposition}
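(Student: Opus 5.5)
The plan is to unfold strong identity into its two constituent reachability relations and apply reachability monotonicity (Definition~\ref{def:reach_monotone}) once in each direction. By Definition~\ref{def:weak_strong_identity}, the hypotheses \(x_1\simeq^{\mathrm{str}}_\diam y_1\) and \(x_2\simeq^{\mathrm{str}}_\diam y_2\) mean that we have four reachability facts: \(x_1\preceq_\tau y_1\), \(y_1\preceq_\tau x_1\), \(x_2\preceq_\tau y_2\), and \(y_2\preceq_\tau x_2\).

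\emph{Well-definedness.} Since \(C_k\) is convex, Proposition~\ref{prop:convex_comp_states} guarantees that both composites \(\omega_{\alpha,k}(x_1,x_2)\) and \(\omega_{\alpha,k}(y_1,y_2)\) lie in \(S_{\diam,k}\) for every \(\alpha\in[0,1]\). Hence \(\preceq_\tau\) can meaningfully relate them.

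\emph{Forward direction.} Fix \(\alpha\in[0,1]\). Apply reachability monotonicity to the pair \(x_1\preceq_\tau y_1\), \(x_2\preceq_\tau y_2\). This gives
\[
\omega_{\alpha,k}(x_1,x_2)\preceq_\tau\omega_{\alpha,k}(y_1,y_2).
\]

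\emph{Reverse direction.} Apply monotonicity again, now to the pair \(y_1\preceq_\tau x_1\), \(y_2\preceq_\tau x_2\), with the same \(\alpha\). This gives
\[
\omega_{\alpha,k}(y_1,y_2)\preceq_\tau\omega_{\alpha,k}(x_1,x_2).
\]
Note that monotonicity is assumed for all tuples of states in \(S_{\diam,k}\), so the roles of the \(x\)'s and \(y\)'s can be swapped freely.

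\emph{Conclusion.} The two directions together are, by definition, \(\omega_{\alpha,k}(x_1,x_2)\simeq^{\mathrm{str}}_\diam\omega_{\alpha,k}(y_1,y_2)\), or equivalently an isomorphism in \(\Sys_\diam\). The congruence claim then follows from two further facts:
\begin{itemize}
\item strong identity is an equivalence relation, since it is the symmetric part of the preorder \(\preceq_\tau\) of Definition~\ref{def:lifecycle_reachability};
\item it is preserved by every binary operation \(\omega_{\alpha,k}\), which is exactly what the argument above shows.
\end{itemize}
These are precisely the defining conditions of a congruence for the family \(\{\omega_{\alpha,k}\}_\alpha\).

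There is no substantial obstacle here. The only point needing care is the well-definedness step: it relies on convexity of \(C_k\), and the monotonicity hypothesis must be read as applying to arbitrary states of \(S_{\diam,k}\), not merely to states on some fixed lifecycle path.
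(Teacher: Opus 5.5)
Your proof is correct and follows the paper's argument: unfold strong identity into reachability in both directions, then apply reachability monotonicity twice, once to each pair of inequalities. The paper's two-line proof leaves implicit the points you state outright, namely well-definedness via convexity of \(C_k\) and that strong identity is an equivalence relation as the symmetric part of the preorder \(\preceq_\tau\).
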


\begin{proof}
Strong identity gives reachability in both directions for each pair, as in Definition~\ref{def:weak_strong_identity}. Applying reachability monotonicity as in Definition~\ref{def:reach_monotone} in both directions gives mutual reachability of the corresponding composed states.
\end{proof}

Proposition~\ref{prop:identity_compatible_composition} states a compatibility condition between AI identity and AI composability,  connecting the reachability preorder and strong identity relation on states with their convex compositions. In Appendix~\ref{appendix:homogeneous_transformations}, we provide an explicit local sufficient condition under which this compatibility
condition holds. In what follows, we discuss convexity.

%%%%%%%%%%%%%%%%%%%%%%%%%%%%%%%%%
\subsection{Interpretation of Convex AI Compositionality}
\label{subsection:int_convex_AI_comp}
The motivation for the compositions \(\{\omega_{\alpha,k}\}_\alpha\) comes from the AI population configurations introduced in Section~\ref{section:motivation}. Given a governance mode---e.g., the retrospective audit of the dynamics of an AI population, see Section~\ref{section:discussion}---governance actors first select a finite set of observed or candidate \emph{anchor states}. For simplicity, consider initially two anchors \(x,y\in S_{\diam,k}\). For every \(\alpha\in[0,1]\), \(\omega^{I_P}_{\alpha}(q_x,q_y):=\alpha q_x+(1-\alpha)q_y\) defines the quantified state obtained by interpolation of the two anchors. Convexity of \(C_k\) is the closure condition ensuring that this composition generates possible states from the selected anchors while preserving the trustworthiness level, and therefore weak AI identity, as in Proposition~\ref{prop:convex_comp_states}. As \(\alpha\) varies, these compositions trace the segment joining \(q_x\) and \(q_y\) in \(I_P\): \(\alpha\) specifies the relative contribution of the two anchors to the composition. Where appropriate to the governance mode, \(\alpha\) and \(1-\alpha\) may encode the fractions of a selected population quantity, such as AI instantiations, processed cases, or inference traffic, allocated to the two anchor states. \emph{Thus, this segment is the image of all possible unconstrained allocations of a selected population quantity between the two anchor states.} (Note that the AI population itself remains supported on the anchors \emph{only}.) The segment may be used counterfactually in governance to compare alternative allocations or, when governance treats such allocations as permissible, normatively as an admissible composition region; Section~\ref{section:discussion} develops these interpretations in some detail. For more than two anchors, the construction generalizes straightforwardly, as we will see in  Section~\ref{section:population_governance}.
However, this notion of convex AI compositionality must be distinguished from lifecycle reachability. A state on such a segment need not be occupied by any AI instantiation at a given time, need not lie on a lifecycle path from \(x\) to \(y\), and need not represent the result of recomputing the underlying metrics on a pooled dataset or across deployments.

%%%%%%%%%%%%%%%%%%%%%%%%%%%%%%
\subsection{When Is Convexity of \(C_k\) a Reasonable Assumption?}
\label{subsection:convex_C_k}
Convexity of a trustworthiness level fiber \(C_k\) as in Proposition~\ref{prop:convex_comp_states} is an assumption satisfied by a broad and governance-relevant class of level functions \(L_P\). In particular, regions defined by conjunctions of affine lower and upper bounds,
\(C_k=\left\{q\in I_P:a_\ell\leq c_\ell^\top q\leq b_\ell,\ \ell=1,\ldots,r\right\},\)
are intersections of half-spaces and are therefore convex. The axis-aligned case
\(a_j\leq q_j\leq b_j,\) \(j=1,\ldots,n,\) is natural when governance levels are specified through dimension-wise requirements on performance, robustness, safety, human oversight, or other measured properties. Such rules can encode, for instance, minimum acceptable values or admissible operating ranges without allowing poor performance in one constrained dimension to be compensated arbitrarily by high performance in another. As a result, convexity has a direct governance interpretation: it expresses coherence between AI composability and the trustworthiness partition. Note that all examples of \(L_P\) functions in Figure~\ref{figure:LP_examples_1d_2d_3d} have convex \(C_k\).
However, we do not assume that \emph{every} level function \(L_P\) must have \emph{only} convex fibers. More complex decision rules may also produce non-convex regions; this is not a defect in a mathematical or governance sense. In such cases, the distribution of states remains well defined, while convex compositions and their governance evaluations (see Section~\ref{section:population_governance}) can be restricted to convex subregions whose  applicability to governance needs to be assessed case by case. In Appendix~\ref{appendix:non_convex} we illustrate an example of a function \(L_P\) with convex and non-convex fibers.

% %%%%%%%%%%%%%%%%%%%%%%%%%%%%%%%%%%%%%%%%%%%
% \subsection{Addendum: An Example of Local Congruence Under Homogeneous Transformations of AI States}
% An explicit case in which reachability monotonicity holds as in Proposition~\ref{prop:identity_compatible_composition} arises when lifecycle transformations are \emph{locally} homogeneous. Fix a set of profile dimensions \(J\subseteq [n]\) and consider an improving relation \(R_J^{+}\), defined by coordinatewise inequalities \(\pi_j(q_x)\leq\pi_j(q_y)\) for all \(j\in J\), or the corresponding degrading relation \(R_J^{-}\) obtained by reversing the inequalities. On a region in which admissible paths between states \(x,y\) contain only improving transformations, or only degrading transformations, reachability reduces locally to the corresponding coordinatewise preorder on states. Since \(\{\omega_{\alpha,k}\}_\alpha\) preserve coordinatewise inequalities, barycentric composition is monotone for this reachability relation and Proposition~\ref{prop:identity_compatible_composition} holds. Further algebraic properties of this construction are given in Appendix~\ref{appendix:homogeneous_transformations}.

%%%%%%%%%%%%%%%%%%%%%%%%%%%%%%%%%%%%%%%%%%%%%%%%%%%%%
%%%%%%%%%%%%%%%%%%%%%%%%%%%%%%%%%%%%%%%%%%%%%%%%%%%%%
\section{\StepII: Population-Level Lifecycle Governance Objects}
\label{section:population_governance}
In the previous sections, we fixed the domain within which AI instantiations can be compared, that is the type \(\diam\), and defined the convex compositions \(\{\omega_{\alpha,k}\}_\alpha\) of their states as in~\eqref{eq:convex_comp_states}. We now introduce two population-level objects for lifecycle governance: the \emph{state-population distribution}, representing how the population is distributed across those states, and the \emph{convex hull}, which represents the configuration generated by the states occupied by an AI population. They are standard objects used in statistical mechanics and convex geometry \citep{berlinsky2019statistical,hug2020lectures}. Their usefulness resides in their interpretation as governance objects, which is postponed to Section~\ref{section:discussion}.

Let \(\mathcal P=\{u_1,\ldots,u_m\}\) be a finite AI population identified under the common type \(\diam\). At time \(t\), its
\(m\) instantiations occupy \(N\leq m\) distinct states \(X_t=\{x_1(t),\ldots,x_N(t)\}\subseteq S_\diam\). (Several AI instantiations may occupy the same state.)

\begin{definition}[AI state population distribution]
\label{def:AI_state_pop_distr}
Let \((\lambda_1(t),\dots,\lambda_N(t))\in\Delta^{N-1}\) denote the governance weights associated with
the states in \(X_t\) and
\(
\Delta^{N-1}=\left\{
(\beta_1,\ldots,\beta_N)\in[0,1]^{N}\;\middle|\;\sum_{i=1}^{N}\beta_i=1\right\}\) the standard \(N-1\)-simplex. The \emph{AI state population
distribution} at time \(t\) is
\begin{align}
\mu_t(\mathcal P):=\sum_{j=1}^{N}\lambda_j(t)\delta_{x_j(t)}.
\end{align}
Its barycenter \(\bar q_{\mu_t(\mathcal P)}(t)\) and weighted state variance \(V_{\mu_t(\mathcal P)}(t)\) are
\begin{align}
\bar q_{\mu_t(\mathcal P)}(t):=\sum_{j=1}^{N}\lambda_j(t)q_{x_j(t)},\quad
V_{\mu_t(\mathcal P)}(t):=
\sum_{j=1}^{N}\lambda_j(t)
\|q_{x_j(t)}-\bar q_{\mu_t(\mathcal P)}(t)\|_2^2.
\end{align} 
\end{definition}

Here, \(\delta_x\) denotes unit mass concentrated at state \(x\)\footnote{
For a set \(A\subseteq S_\diam\), the Dirac distribution satisfies \(\delta_x(A)=1\) when \(x\in A\) and \(0\) otherwise.} and \(\|\cdot\|_2\) the Euclidean norm in \(\mathbb{R}^n.\)
The semantics of the weights \(\lambda_j(t)\) is determined by the AI governance task at hand: for instance, \(\lambda_j(t)\) may represent fractions of AI instantiations at each state in \(X_t\), as we will discuss in Section~\ref{section:discussion}. Accordingly, \(\mu_t(\mathcal P)\) records the distribution  of the selected governance weights across quantified profiles. (When
\(\lambda_j(t)\) represents the fraction of AI instantiations occupying state \(x_j(t)\), this coincides with the distribution of AI instantiations across those states.) Its barycenter \(\bar q_{\mu_t(\mathcal P)}(t)\) summarizes its \emph{central} profile and the weighted state variance \(V_{\mu_t(\mathcal P)}(t)\) summarizes its \emph{heterogeneity}. The barycenter can be written as a sequential composition of the (binary) convex combination maps in \(I_P\). 
For instance, for \(N=2\), \(\bar q_{\mu_t(\mathcal P)}(t)=\omega^{I_P}_{\lambda_1(t)}(q_{x_1(t)},q_{x_2(t)}),\) \(\lambda_2(t)=1-\lambda_1(t)\).
% \footnote{The construction extends recursively to \(N>3\).} 
% For \(N=3\), \(\bar q_{\mu_t(\mathcal P)}(t)=
% \omega_{\alpha_2(t)}(\omega_{\alpha_1(t)}(q_{x_1(t)},q_{x_2(t)}),q_{x_3(t)}),\)
% where \(\lambda_1(t)=\alpha_1(t)\alpha_2(t)\), \(\lambda_2(t)=(1-\alpha_1(t))\alpha_2(t)\), and \(\lambda_3(t)=1-\alpha_2(t)\). The construction extends recursively to \(N>3\). When all anchor states lie in a common fiber \(S_{\diam,k}\) with
% convex \(C_k\), 
% \(\bigl(\bar q_{\mu_t(\mathcal P)}(t),k\bigr)\) lies in \(S_{\diam,k}\).

\begin{definition}[AI population configuration hull]
Let
\(X_t=\{x_1(t),\ldots,x_N(t)\}\subseteq S_\diam\) be the distinct states occupied by population \(\mathcal P\) at time \(t\).
Its \emph{AI population configuration hull} is
\begin{align}
H_t(\mathcal P):
% =\operatorname{co}
% \{q_{x_1(t)},\ldots,q_{x_N(t)}\}
=
\left\{
\sum_{i=1}^{N}\beta_i q_{x_i(t)}
\;\middle|\;
(\beta_1,\dots,\beta_N)\in\Delta^{N-1}
\right\} \subseteq I_P, \label{eq:convex_hull}
\end{align}
namely, the smallest convex set containing the \(N\) occupied profiles of the states in \(X_t\).
\end{definition}

Equivalently, the convex hull \(H_t(\mathcal P)\) contains all quantified
profiles obtained by assigning non-negative weights summing to one to the
\(N\) states---called \emph{anchor states} or \emph{generators} of the
hull---occupied by the elements in \(\mathcal P\). Note that, when \(X_t\subseteq S_{\diam,k}\) and \(C_k\) is convex,
then \(H_t(\mathcal P)\subseteq C_k\). 
The coefficients \(\beta\in\Delta^{N-1}\) in
Equation~\eqref{eq:convex_hull} range over all such possible weightings.
By contrast, the population weights \(\lambda(t)\) introduced in Definition~\ref{def:AI_state_pop_distr} specify one particular allocation at time \(t\). When
\(\lambda_i(t)=m_i(t)/m\) represents the fraction of AI instantiations
occupying anchor \(x_i(t)\), the corresponding barycenter
\(\bar q_{\mu_t(\mathcal P)}(t)\) is the point of the hull obtained by
setting \(\beta=\lambda(t)\). Figure~\ref{figure:cube_zoom_c3_c5_hulls}(c) illustrates this distinction between AI population allocation and hull geometry. Four fixed anchor states remain unchanged between \(t_0\) and \(t_1\), and therefore
generate the same convex hull, while their occupancy frequencies
\(\lambda_i(t)\) of an AI population are redistributed across them. The barycenter consequently moves
within the fixed hull and the weighted variance increases from
\(1.01\times10^{-2}\) to \(1.57\times10^{-2}\), while
\(\|\lambda(t_1)-\lambda(t_0)\|_1=0.50\) summarizes the magnitude of the
redistribution. As \(\lambda_i(t)\) are relative frequencies,
the representation does not depend on the  number \(m\) of AI instantiations in the population.

%%%%%%%%%%%%%%%%%%%%%%%%%%%%%%%%%%%%%%%%%%%%%%%%%%%
% CUBE AND ZOOM ON C_3 / C_5 WITH CONVEX HULLS
%%%%%%%%%%%%%%%%%%%%%%%%%%%%%%%%%%%%%%%%%%%%%%%%%%%
\begin{figure}[h!]
\centering

% Common viewing angle
\tdplotsetmaincoords{60}{40}

\tikzset{
  paneltitle/.style={font=\small\bfseries},
  axis3d/.style={-{Stealth[length=2.3mm]}, line width=0.8pt},
  axis3dsoft/.style={-{Stealth[length=1.8mm]}, line width=0.35pt, draw=black!45},
  x2lab/.style={font=\scriptsize, text=black!55},
  cubeface/.style={draw=black, line width=0.45pt},
  levellab/.style={font=\scriptsize\bfseries},
  hullline/.style={draw=black, line width=0.45pt},
  hulldots/.style={draw=black, dotted, line width=0.25pt},
  hullpt/.style={circle, fill=black, inner sep=1.05pt},
  zoomline/.style={
    draw=black!45,
    line width=0.18pt,
    dash pattern=on 0pt off 1.15pt,
    line cap=round}
}

\makebox[\textwidth][c]{%

%%%%%%%%%%%%%%%%%%%%%%%%%%%%%%%%%%%%%%%%%%%%%%%%
% PANEL (a): FULL CUBE
%%%%%%%%%%%%%%%%%%%%%%%%%%%%%%%%%%%%%%%%%%%%%%%%
\begin{minipage}[t]{0.32\textwidth}
\vspace{0pt}
\centering
\small\bfseries (a)\par
\vspace{2em}

\begin{tikzpicture}[
  tdplot_main_coords,
  remember picture,
  scale=3.0,
  line join=round,
  line cap=round
]

% C2
\filldraw[cubeface,fill=black!18,opacity=0.75]
  (0,0.4,0) -- (0.7,0.4,0) -- (0.7,0.4,1) -- (0,0.4,1) -- cycle;
\filldraw[cubeface,fill=black!18,opacity=0.75]
  (0.7,0.4,0) -- (0.7,1,0) -- (0.7,1,1) -- (0.7,0.4,1) -- cycle;
\filldraw[cubeface,fill=black!18,opacity=0.75]
  (0,0.4,1) -- (0.7,0.4,1) -- (0.7,1,1) -- (0,1,1) -- cycle;
\node[levellab] at (0.30,0.72,0.94) {$\mathbf{2}$};

% C4
\filldraw[cubeface,fill=black!10,opacity=0.78]
  (0.7,0.4,0) -- (1,0.4,0) -- (1,0.4,0.8) -- (0.7,0.4,0.8) -- cycle;
\filldraw[cubeface,fill=black!10,opacity=0.78]
  (1,0.4,0) -- (1,1,0) -- (1,1,0.8) -- (1,0.4,0.8) -- cycle;
\filldraw[cubeface,fill=black!10,opacity=0.78]
  (0.7,0.4,0.8) -- (1,0.4,0.8) -- (1,1,0.8) -- (0.7,1,0.8) -- cycle;
\node[levellab] at (0.88,0.70,0.38) {$\mathbf{4}$};

% C6
\filldraw[cubeface,fill=black!3,opacity=0.82]
  (0.7,0.4,0.8) -- (1,0.4,0.8) -- (1,0.4,1) -- (0.7,0.4,1) -- cycle;
\filldraw[cubeface,fill=black!3,opacity=0.82]
  (1,0.4,0.8) -- (1,1,0.8) -- (1,1,1) -- (1,0.4,1) -- cycle;
\filldraw[cubeface,fill=black!3,opacity=0.82]
  (0.7,0.4,1) -- (1,0.4,1) -- (1,1,1) -- (0.7,1,1) -- cycle;
\node[levellab] at (0.88,0.70,0.99) {$\mathbf{6}$};

% C1
\filldraw[cubeface,fill=black!26,opacity=0.75]
  (0,0,0) -- (0.7,0,0) -- (0.7,0,1) -- (0,0,1) -- cycle;
\filldraw[cubeface,fill=black!26,opacity=0.75]
  (0.7,0,0) -- (0.7,0.4,0) -- (0.7,0.4,1) -- (0.7,0,1) -- cycle;
\filldraw[cubeface,fill=black!26,opacity=0.75]
  (0,0,1) -- (0.7,0,1) -- (0.7,0.4,1) -- (0,0.4,1) -- cycle;
\node[levellab] at (0.30,0.12,0.50) {$\mathbf{1}$};

% C3
\filldraw[cubeface,fill=black!12,opacity=0.80]
  (0.7,0,0) -- (1,0,0) -- (1,0,0.8) -- (0.7,0,0.8) -- cycle;
\filldraw[cubeface,fill=black!12,opacity=0.80]
  (1,0,0) -- (1,0.4,0) -- (1,0.4,0.8) -- (1,0,0.8) -- cycle;
\filldraw[cubeface,fill=black!12,opacity=0.80]
  (0.7,0,0.8) -- (1,0,0.8) -- (1,0.4,0.8) -- (0.7,0.4,0.8) -- cycle;
\node[levellab] at (0.75,0.12,0.35) {$\mathbf{3}$};

% C5
\filldraw[cubeface,fill=black!5,opacity=0.83]
  (0.7,0,0.8) -- (1,0,0.8) -- (1,0,1) -- (0.7,0,1) -- cycle;
\filldraw[cubeface,fill=black!5,opacity=0.83]
  (1,0,0.8) -- (1,0.4,0.8) -- (1,0.4,1) -- (1,0,1) -- cycle;
\filldraw[cubeface,fill=black!5,opacity=0.83]
  (0.7,0,1) -- (1,0,1) -- (1,0.4,1) -- (0.7,0.4,1) -- cycle;
\node[levellab] at (0.88,0.12,0.99) {$\mathbf{5}$};

%%%%%%%%%%%%%%%%%%%%%%%%%%%%%%%%%%%%%%%%%%%%%%%%%%%%%
% Hidden rear edges completing the cube
%%%%%%%%%%%%%%%%%%%%%%%%%%%%%%%%%%%%%%%%%%%%%%%%%%%%%

% Rear-bottom edge behind C2
\draw[
  draw=black!55,
  line width=0.30pt
]
  (0,1,0) -- (0.7,1,0);

% Rear-bottom edge behind C4
\draw[
  draw=black!55,
  line width=0.30pt
]
  (0.7,1,0) -- (1,1,0);

% Far rear-left vertical outer edge
\draw[
  draw=black!55,
  line width=0.30pt
]
  (0,1,0) -- (0,1,1);

% Axes
\draw[axis3d] (0,0,0) -- (1.12,0,0)
  node[anchor=west] {$x_1$};

\draw[axis3dsoft] (0,0,0) -- (0,1.12,0)
  node[x2lab,anchor=south] {$x_2$};

\draw[axis3d] (0,0,0) -- (0,0,1.15)
  node[anchor=east] {$x_3$};

%%%%%%%%%%%%%%%%%%%%%%%%%%%%%%%%%%%%%%%%%%%%%%%%%%%%%
% Zoom anchors: C3/C5 pillar -> panel (b)
% Right-front vertical edge of the C3/C5 pillar
%%%%%%%%%%%%%%%%%%%%%%%%%%%%%%%%%%%%%%%%%%%%%%%%%%%%%
\coordinate (zoomA1-top) at (0.7,0,1);
\coordinate (zoomA1-mid) at (0.7,0,0.8);
\coordinate (zoomA1-bot) at (0.7,0,0);

\coordinate (zoomA2-top) at (0.7,0.4,1);
\coordinate (zoomA2-mid) at (0.7,0.4,0.8);
\coordinate (zoomA2-bot) at (0.7,0.4,0);

\coordinate (zoomA3-top) at (1,0.4,1);
\coordinate (zoomA3-mid) at (1,0.4,0.8);
\coordinate (zoomA3-bot) at (1,0.4,0);

\coordinate (zoomA4-top) at (1,0,1);
\coordinate (zoomA4-mid) at (1,0,0.8);
\coordinate (zoomA4-bot) at (1,0,0);

\end{tikzpicture}
\end{minipage}%
\hspace{0.02\textwidth}%

%%%%%%%%%%%%%%%%%%%%%%%%%%%%%%%%%%%%%%%%%%%%%%%%
% PANEL (b): ZOOM ON C_3 AND C_5 + H_{t_0}, H_{t_1}
%%%%%%%%%%%%%%%%%%%%%%%%%%%%%%%%%%%%%%%%%%%%%%%%
\begin{minipage}[t]{0.30\textwidth}
\vspace{0pt}
\centering
\small\bfseries (b)\par
\vspace{2em}

\begin{tikzpicture}[
  tdplot_main_coords,
  scale=4.2,
  remember picture,
  line join=round,
  line cap=round
]

% Bottom block = C3
\filldraw[cubeface,fill=black!12,opacity=0.80]
  (0,0,0) -- (0.3,0,0) -- (0.3,0,0.8) -- (0,0,0.8) -- cycle;

\filldraw[cubeface,fill=black!12,opacity=0.80]
  (0.3,0,0) -- (0.3,0.4,0) -- (0.3,0.4,0.8) -- (0.3,0,0.8) -- cycle;

\filldraw[cubeface,fill=black!12,opacity=0.80]
  (0,0,0.8) -- (0.3,0,0.8) -- (0.3,0.4,0.8) -- (0,0.4,0.8) -- cycle;

\node[levellab,anchor=east]
  at (-0.02,0.18,0.26)
  {$\mathbf{3}$};

% Top block = C5
\filldraw[cubeface,fill=black!5,opacity=0.83]
  (0,0,0.8) -- (0.3,0,0.8) -- (0.3,0,1.0) -- (0,0,1.0) -- cycle;

\filldraw[cubeface,fill=black!5,opacity=0.83]
  (0.3,0,0.8) -- (0.3,0.4,0.8) -- (0.3,0.4,1.0) -- (0.3,0,1.0) -- cycle;

\filldraw[cubeface,fill=black!5,opacity=0.83]
  (0,0,1.0) -- (0.3,0,1.0) -- (0.3,0.4,1.0) -- (0,0.4,1.0) -- cycle;

\node[levellab,anchor=east]
  at (0.22,0.16,1.13)
  {$\mathbf{5}$};

%%%%%%%%%%%%%%%%%%%%%%%%%%%%%%%%%%%%%%%%%%%%%%%%%%%%%
% Hidden rear edges completing the C3/C5 pillar
%%%%%%%%%%%%%%%%%%%%%%%%%%%%%%%%%%%%%%%%%%%%%%%%%%%%%

% Rear-bottom edge
\draw[
  draw=black!55,
  line width=0.30pt
]
  (0,0.4,0) -- (0.3,0.4,0);

% Far rear-left vertical outer edge
\draw[
  draw=black!55,
  line width=0.30pt
]
  (0,0.4,0) -- (0,0.4,1);

% --- H_{t_2} in C5 (upper hull) ---
\coordinate (A0) at (0.05,0.07,0.985);
\coordinate (B0) at (0.26,0.06,0.988);
\coordinate (C0) at (0.22,0.28,0.992);
\coordinate (D0) at (0.08,0.24,0.987);

% --- H_{t_3} in C3 (lower hull) ---
\coordinate (A1) at (0.03,0.09,0.500);
\coordinate (B1) at (0.15,0.08,0.515);
\coordinate (C1) at (0.17,0.23,0.552);
\coordinate (D1) at (0.05,0.21,0.520);

% Hull lines H_{t_2}
\draw[hullline]
  (A0) -- (B0) -- (C0) -- (D0) -- cycle;

% Hull lines H_{t_3}
\draw[hullline]
  (A1) -- (B1) -- (C1) -- (D1) -- cycle;

% Correspondence dotted lines
\draw[hulldots] (A0) -- (A1);
\draw[hulldots] (B0) -- (B1);
\draw[hulldots] (C0) -- (C1);
\draw[hulldots] (D0) -- (D1);

% Hull points
\node[hullpt] at (A0) {};
\node[hullpt] at (B0) {};
\node[hullpt] at (C0) {};
\node[hullpt] at (D0) {};

\node[hullpt] at (A1) {};
\node[hullpt] at (B1) {};
\node[hullpt] at (C1) {};
\node[hullpt] at (D1) {};

% Barycenters as stars
\node at (0.1475,0.1625,0.988) {$\star$};
\node at (0.1300,0.1525,0.55) {$\star$};

% Hull labels
\node[font=\scriptsize]
  at (0.255,0.295,1.03)
  {$H_{t_2}$};

\node[font=\scriptsize]
  at (0.24,0.235,0.537)
  {$H_{t_3}$};

% Axes
\draw[axis3d]
  (0,0,0) -- (0.42,0,0)
  node[anchor=west] {$x_1$};

\draw[axis3dsoft]
  (0,0,0) -- (0,0.52,0)
  node[x2lab,anchor=south] {$x_2$};

\draw[axis3d]
  (0,0,0) -- (0,0,1.15)
  node[anchor=east] {$x_3$};

%%%%%%%%%%%%%%%%%%%%%%%%%%%%%%%%%%%%%%%%%%%%%%%%%%%%%
% Zoom anchors corresponding to panel (a)
%%%%%%%%%%%%%%%%%%%%%%%%%%%%%%%%%%%%%%%%%%%%%%%%%%%%%
\coordinate (zoomB1-top) at (0,0,1);
\coordinate (zoomB1-bot) at (0,0,0);

\coordinate (zoomB2-top) at (0,0.4,1);
\coordinate (zoomB2-bot) at (0,0.4,0);

\coordinate (zoomB3-top) at (0.3,0.4,1);
\coordinate (zoomB3-bot) at (0.3,0.4,0);

\coordinate (zoomB4-top) at (0.3,0,1);
\coordinate (zoomB4-bot) at (0.3,0,0);

\end{tikzpicture}
\end{minipage}%
\hspace{0.02\textwidth}%

%%%%%%%%%%%%%%%%%%%%%%%%%%%%%%%%%%%%%%%%%%%%%%%
% PANEL (c): FIXED HULL, CHANGING ALLOCATIONS
%%%%%%%%%%%%%%%%%%%%%%%%%%%%%%%%%%%%%%%%%%%%%%%
\begin{minipage}[t]{0.34\textwidth}
\vspace{0pt}
\centering
\small\bfseries (c)\par
\vspace{2em}

%%%%%%%%%%%%%%%%%%%%%%%%%%%%%%%%%%%%%%%%%%%%%%%%%%%%%
% User controls
%%%%%%%%%%%%%%%%%%%%%%%%%%%%%%%%%%%%%%%%%%%%%%%%%%%%%
\def\PanelCScale{1.00}          % overall scale of panel (c)
\def\HullScaleC{4.50}           % enlarges both hull copies uniformly

\def\TopHullShift{2.00cm}       % base vertical shift of H_{t_0}
\def\TopHullOffset{-0.75cm}     % extra offset: more negative = further down
\def\BottomHullShift{-0.8cm}   % vertical shift of H_{t_1}

\def\TimeArrowXShift{2.75cm}    % distance of time arrow from hull
\def\TimeArrowPad{0.55cm}       % extension beyond t_0/t_1 ticks

\def\MetricsYOffset{-0.85cm}    % distance of metrics strip below H_{t_1}

\begin{tikzpicture}[
    tdplot_main_coords,
    scale=\PanelCScale,
    line join=round,
    line cap=round
]

%%%%%%%%%%%%%%%%%%%%%%%%%%%%%%%%%%%%%%%%%%%%%%%%%%%%%
% H_{t_0}
%%%%%%%%%%%%%%%%%%%%%%%%%%%%%%%%%%%%%%%%%%%%%%%%%%%%%
\begin{scope}[yshift=\dimexpr\TopHullShift+\TopHullOffset\relax]

% same hull coordinates as in panel (b), uniformly enlarged
% q_1 = upper-right, then counterclockwise q_2, q_3, q_4
\coordinate (q1t0) at ({\HullScaleC*1.02},{\HullScaleC*0.28},{\HullScaleC*0.992});
\coordinate (q2t0) at ({\HullScaleC*0.88},{\HullScaleC*0.24},{\HullScaleC*0.987});
\coordinate (q3t0) at ({\HullScaleC*0.85},{\HullScaleC*0.07},{\HullScaleC*0.985});
\coordinate (q4t0) at ({\HullScaleC*1.06},{\HullScaleC*0.06},{\HullScaleC*0.988});

% barycenter position -- display coordinates only
\coordinate (g0) at ({\HullScaleC*0.9030},{\HullScaleC*0.1150},{\HullScaleC*0.98645});

% hull
\draw[hullline] (q1t0) -- (q2t0) -- (q3t0) -- (q4t0) -- cycle;

% hull points
\node[hullpt] at (q1t0) {};
\node[hullpt] at (q2t0) {};
\node[hullpt] at (q3t0) {};
\node[hullpt] at (q4t0) {};

% labels + local weights
\node[font=\tiny, align=left, anchor=west]
at ($(q1t0)+(0.03,0.04,0.01)$)
{$q_1,\ \lambda_1(t_0)=0.10$};

\node[font=\tiny, align=right, anchor=north]
at ($(q2t0)+(0,0.3,0.4)$)
{$q_2,\ \lambda_2(t_0)=0.20$};

\node[font=\tiny, align=right, anchor=east]
at ($(q3t0)+(-0.03,-0.02,0.01)$)
{$q_3,\ \lambda_3(t_0)=0.60$};

\node[font=\tiny, align=left, anchor=west]
at ($(q4t0)+(0.03,-0.02,0.01)$)
{$q_4,\ \lambda_4(t_0)=0.10$};

% barycenter
\node[font=\normalsize] at (g0) {$\star$};

\end{scope}

%%%%%%%%%%%%%%%%%%%%%%%%%%%%%%%%%%%%%%%%%%%%%%%%%%%%%
% H_{t_1}
%%%%%%%%%%%%%%%%%%%%%%%%%%%%%%%%%%%%%%%%%%%%%%%%%%%%%
\begin{scope}[yshift=\BottomHullShift]

% same hull geometry
\coordinate (q1t1) at ({\HullScaleC*1.02},{\HullScaleC*0.28},{\HullScaleC*0.992});
\coordinate (q2t1) at ({\HullScaleC*0.88},{\HullScaleC*0.24},{\HullScaleC*0.987});
\coordinate (q3t1) at ({\HullScaleC*0.85},{\HullScaleC*0.07},{\HullScaleC*0.985});
\coordinate (q4t1) at ({\HullScaleC*1.06},{\HullScaleC*0.06},{\HullScaleC*0.988});

% barycenter position -- display coordinates only
\coordinate (g1) at ({\HullScaleC*0.9245},{\HullScaleC*0.0945},{\HullScaleC*0.98645});

% hull
\draw[hullline] (q1t1) -- (q2t1) -- (q3t1) -- (q4t1) -- cycle;

% hull points
\node[hullpt] at (q1t1) {};
\node[hullpt] at (q2t1) {};
\node[hullpt] at (q3t1) {};
\node[hullpt] at (q4t1) {};

% labels + local weights
\node[font=\tiny, align=left, anchor=west]
at ($(q1t1)+(0.03,0.04,0.01)$)
{$q_1,\ \lambda_1(t_1)=0.05$};

\node[font=\tiny, align=right, anchor=north]
at ($(q2t1)+(0,0.3,0.4)$)
{$q_2,\ \lambda_2(t_1)=0.10$};

\node[font=\tiny, align=right, anchor=east]
at ($(q3t1)+(-0.03,-0.02,0.01)$)
{$q_3,\ \lambda_3(t_1)=0.50$};

\node[font=\tiny, align=left, anchor=west]
at ($(q4t1)+(0.03,-0.02,0.01)$)
{$q_4,\ \lambda_4(t_1)=0.35$};

% barycenter
\node[font=\normalsize] at (g1) {$\star$};

\end{scope}

%%%%%%%%%%%%%%%%%%%%%%%%%%%%%%%%%%%%%%%%%%%%%%%%%%%%%
% Same-anchor correspondence
%%%%%%%%%%%%%%%%%%%%%%%%%%%%%%%%%%%%%%%%%%%%%%%%%%%%%
\draw[hulldots] (q1t0) -- (q1t1);
\draw[hulldots] (q2t0) -- (q2t1);
\draw[hulldots] (q3t0) -- (q3t1);
\draw[hulldots] (q4t0) -- (q4t1);

%%%%%%%%%%%%%%%%%%%%%%%%%%%%%%%%%%%%%%%%%%%%%%%%%%%%%
% Time-flow arrow
%%%%%%%%%%%%%%%%%%%%%%%%%%%%%%%%%%%%%%%%%%%%%%%%%%%%%

% vertical centers of the two hull copies
\coordinate (hullCenter0) at ($(q1t0)!0.5!(q3t0)$);
\coordinate (hullCenter1) at ($(q1t1)!0.5!(q3t1)$);

% use right-hand hull edge rather than current bounding box,
% so labels/metrics do not push the arrow away
\coordinate (arrowXTop) at ([xshift=\TimeArrowXShift]q4t0);
\coordinate (arrowXBot) at ([xshift=\TimeArrowXShift]q4t1);

\coordinate (timeTickTop) at (arrowXTop |- hullCenter0);
\coordinate (timeTickBot) at (arrowXBot |- hullCenter1);

\draw[
  -{Latex[length=2.2mm,width=1.4mm]},
  line width=0.55pt
]
  ([yshift=\TimeArrowPad]timeTickTop)
  --
  ([yshift=-\TimeArrowPad]timeTickBot)
  node[pos=1,right=1pt] {$t$};

% ticks
\draw[line width=0.45pt]
  ([xshift=-0.10cm]timeTickTop)
  --
  ([xshift=0.10cm]timeTickTop)
  node[right=0.13cm] {$t_0$};

\draw[line width=0.45pt]
  ([xshift=-0.10cm]timeTickBot)
  --
  ([xshift=0.10cm]timeTickBot)
  node[right=0.13cm] {$t_1$};

%%%%%%%%%%%%%%%%%%%%%%%%%%%%%%%%%%%%%%%%%%%%%%%%%%%%%
% Compact metric comparison
%
% Numerical values use the UN-SCALED q_i profiles of the
% worked example, not the graphical display coordinates.
%%%%%%%%%%%%%%%%%%%%%%%%%%%%%%%%%%%%%%%%%%%%%%%%%%%%%

\coordinate (metricsAnchor) at ($(q3t1)!0.5!(q4t1)$);

\node[
  font=\tiny,
  anchor=north,
  align=center
] (metrics) at ([yshift=\MetricsYOffset]metricsAnchor)
{%
$\displaystyle
\begin{array}{c|c|c}
 & \bar q_{\mu_t(\mathcal P)}(t)
 & V_{\mu_t(\mathcal P)}(t)\\
\hline
t_0 & (0.86,\,0.30,\,0.89) & 1.01\times10^{-2}\\
t_1 & (0.82,\,0.25,\,0.87) & 1.57\times10^{-2}
\end{array}
$
};

\node[
  font=\tiny,
  anchor=north
] at ([yshift=0cm]metrics.south)
{$\|\lambda(t_1)-\lambda(t_0)\|_1=0.50$};

\end{tikzpicture}
\end{minipage}%
}

%%%%%%%%%%%%%%%%%%%%%%%%%%%%%%%%%%%%%%%%%%%%%%%%%%%%%
% Zoom connectors: panel (a) C3/C5 pillar -> panel (b)
%%%%%%%%%%%%%%%%%%%%%%%%%%%%%%%%%%%%%%%%%%%%%%%%%%%%%
\begin{tikzpicture}[remember picture,overlay]
% \draw[zoomline] (zoomA1-top) -- (zoomB1-top);
% %\draw[zoomline] (zoomA1-mid) -- (zoomB1-mid);
% \draw[zoomline] (zoomA1-bot) -- (zoomB1-bot);

% \draw[zoomline] (zoomA2-top) -- (zoomB2-top);
% %\draw[zoomline] (zoomA2-mid) -- (zoomB2-mid);
% \draw[zoomline] (zoomA2-bot) -- (zoomB2-bot);

\draw[zoomline] (zoomA3-top) -- (zoomB2-top);
%\draw[zoomline] (zoomA3-mid) -- (zoomB3-mid);
\draw[zoomline] (zoomA3-bot) -- (zoomB2-bot);

\draw[zoomline] (zoomA4-top) -- (zoomB1-top);
%\draw[zoomline] (zoomA4-mid) -- (zoomB4-mid);
\draw[zoomline] (zoomA4-bot) -- (zoomB1-bot);
\end{tikzpicture}

\caption{
(a) Partition of the cube \([0,1]^3\) into six regions \(C_1,\dots,C_6\). 
(b) Schematic migration of a population support over time: the quantified profiles of anchor states move from \(C_5\) at \(t_2\) to \(C_3\) at \(t_3\), thereby changing the convex hull from \(H_{t_2}\) to \(H_{t_3}\). The stars denote the associated barycenters.
(c) Governance example with four fixed anchor states in the same trustworthiness region. At \(t_0\) and \(t_1\), the anchors---and therefore the convex hull---remain unchanged, while the population allocation \(\lambda(t)=(\lambda_1(t),\dots,\lambda_4(t))\) shifts across them, moving the barycenter and changing heterogeneity. The weights represent allocation frequencies, \(\lambda_i(t)=m_i(t)/m\), so the AI population may contain arbitrarily many AI instantiations; only their relative allocation is shown. In the example, the weighted variance increases by approximately 55\%, while \(\|\lambda(t_1)-\lambda(t_0)\|_1=
\sum_{i=1}^{N}
\left|
\lambda_i(t_1)-\lambda_i(t_0)
\right|\) quantifies the magnitude of the redistribution.
}
\label{figure:cube_zoom_c3_c5_hulls}
\end{figure}
%%%%%%%%%%%%%%%%%%%%%%%%%%%%%%%%%%%%%%%%%%%%%%%%%%%

%%%%%%%%%%%%%%%%%%%%%%%%%%%%%%%%%%%%%%%%%%%%%%%%%%%%%%
%%%%%%%%%%%%%%%%%%%%%%%%%%%%%%%%%%%%%%%%%%%%%%%%%%%%%%
\section{Discussion}
\label{section:discussion}

%%%%%%%%%%%%%%%%%%%%%%%%%%%%
\subsection{AI Population Governance Workflow}
\label{subsection:AI_governance_workflow}
Figure~\ref{figure:population_governance_workflow} displays the AI population governance workflow defined by our approach. The workflow promotes the documentation and contestability of governance decisions concerning AI populations by requiring explicit choices and justification throughout \StepI~ and \StepII. Here, \StepI~ fixes the comparison domain within which AI instantiations can be treated as members of one population and provides the operationalization of the datum \(\diam\) as well as the AI identity relations required to reason about them across deployments and over time. \StepII~ then selects and designs the governance mode, defines the anchor states relevant to the governance question, and computes and monitors the mathematical objects we introduced in Section~\ref{section:population_governance}. \emph{Our use of these well-understood mathematical objects as governance objects for AI populations is deliberate: they support governance while keeping the workflow manageable}.
Now, we show how the governance role of these objects depends on the governance setup.

%%%%%%%%%%%%%%%%%%%%%%%%%%%%%%%%%
\subsection{Three Modes of AI Population Governance}
\label{subsection:governance_modes}
The AI population objects from Section~\ref{section:population_governance}  support different forms of AI population governance called \emph{governance modes}. Table~\ref{table:population_governance_strategy} displays the three governance modes, their distinct purposes, evidence, procedures, signals, and associated risks.

\emph{Retrospective governance} asks how an AI population reached a configuration that has become relevant for audit, incident analysis, or reassessment. In the healthcare example from Section~\ref{section:motivation}, this may involve reconstructing whether degradation remained confined to one hospital or appeared across several hospitals, and whether the distribution of cases contributed to the practical relevance of the problem. In the recruitment example in the same section, retrospective analysis may distinguish whether an undesirable configuration followed a change in one variant, the introduction or withdrawal of a variant, or a redistribution of inference traffic among otherwise unchanged variants. Accordingly, retrospective governance reconstructs past population distributions and hulls to trace changes in states, support, and weights, while remaining sensitive to incomplete records and selection or protocol bias.

\emph{Operational governance} concerns the AI population as it is currently deployed and evolving. Newly collected evidence may move individual instantiations to different quantified states, change the states represented in the population, or modify their relative operational weights. In the healthcare example, this supports monitoring of the states occupied by hospital deployments together with the exposure associated with them. In the recruitment example, it supports monitoring of production, canary, and alternative variants as their traffic shares change over time. Operational governance updates these objects as evidence arrives, using level crossings, new states, and weight shifts to support review, restriction, or rollback while accounting for delayed evidence and possible masking by aggregation.

%%%%%%%%%%%%%%%%%%%%%%%%%%%%%%%%%%%%%%%%%%%%%%%%%%%%%
% FIGURE: AI POPULATION GOVERNANCE WORKFLOW
%%%%%%%%%%%%%%%%%%%%%%%%%%%%%%%%%%%%%%%%%%%%%%%%%%%%%
\begin{figure*}[h!]
\centering

\begin{tikzpicture}[
    box/.style={
        % draw,
        rounded corners=2pt,
        inner sep=4pt,
        font=\scriptsize,
        minimum height=3.10cm
    },
    arr/.style={
        -{Latex[length=2mm,width=1.4mm]},
        line width=0.65pt
    },
    node distance=0.025\textwidth
]

\newcommand{\flowbullet}{%
  \raisebox{0.15ex}{\scalebox{0.55}{$\bullet$}}%
}

\newcommand{\flowcontent}[1]{%
  \parbox[c][2.05cm][c]{0.205\textwidth}{#1}%
}

\node[box, fill=stepIteal] (datum) {
\flowcontent{%
\textbf{1. Fix comparison domain \(\diam\) \citep{Ferrario2026CategoryIdentity,Ferrario2025TrustworthinessMetaphysics} for the candidate AI population}\\[2pt]
\flowbullet\,\(F\): techno-function\\
\flowbullet\,\(P\): TW profile dimensions\\
\flowbullet\,\(L_P\): TW level function
}};

\node[box, fill=stepIteal, right=of datum] (protocol) {
\flowcontent{%
\textbf{2. Operationalize \(\diam\) \citep{ferrario2026methodology}}\\[2pt]
\flowbullet\,Define metrics for the TW dimensions in the protocol \(P\)\\
\flowbullet\,Define quantified profiles \(q\in I_P\)\\
\flowbullet\,Validate TW level function \(L_P\)
}};

\node[box, fill=stepIIamber, right=of protocol] (anchors) {
\flowcontent{%
\textbf{3. Select and design governance mode, define anchor states}\\[2pt]
\flowbullet\,Define the AI governance mode and the AI population under \(\diam\)\\
\flowbullet\,Define the observed or candidate anchor states\\
\flowbullet\,Collect provenance, evidence, assumptions, and constraints for the chosen governance mode
}};

\node[box, fill=stepIIamber, right=of anchors] (population) {
\flowcontent{%
\textbf{4. Compute and monitor AI population governance objects}\\[2pt]
\flowbullet\,Population distribution and weights, convex hull, barycenter and  variance \\
\flowbullet\,Monitor their changes over time and the transformations between anchor states \\
\flowbullet\,Retain anchor-level inspection
}};

\draw[arr] (datum) -- (protocol);
\draw[arr] (protocol) -- (anchors);
\draw[arr] (anchors) -- (population);

%%%%%%%%%%%%%%%%%%%%%%%%%%%%%%%%%%%%%%%%%%%%%%%%%%%%%
% STEP I / STEP II GROUPING
%%%%%%%%%%%%%%%%%%%%%%%%%%%%%%%%%%%%%%%%%%%%%%%%%%%%%

\draw[line width=0.85pt, draw=stepIteal]
  ($(datum.north west)+(0,3mm)$)
  --
  node[
    midway,
    fill=stepIteal,
    rounded corners=2pt,
    line width=0.45pt,
    inner xsep=5.5pt,
    inner ysep=1.6pt,
    minimum height=2.35ex,
    font=\scriptsize\bfseries,
    align=center
  ] {\strut Step~I}
  ($(protocol.north east)+(0,3mm)$);

\draw[line width=0.85pt, draw=stepIIamber]
  ($(anchors.north west)+(0,3mm)$)
  --
  node[
    midway,
    fill=stepIIamber,
    rounded corners=2pt,
    line width=0.45pt,
    inner xsep=5.5pt,
    inner ysep=1.6pt,
    minimum height=2.35ex,
    font=\scriptsize\bfseries,
    align=center
  ] {\strut Step~II}
  ($(population.north east)+(0,3mm)$);

\end{tikzpicture}

\caption{
Two-step AI population governance workflow. \StepI~ fixes and operationalizes the comparison domain \(\diam\), while \StepII~ selects governance-relevant anchor states and constructs and monitors the corresponding population-level governance objects. The workflow is common to the governance modes summarized in Table~\ref{table:population_governance_strategy}. Abbreviation: TW, trustworthiness.
}
\label{figure:population_governance_workflow}
\end{figure*}

Finally, \emph{prospective governance} concerns configurations that have not yet been realized. Candidate states and weights are introduced through scenarios representing, for instance, planned updates, alternative deployment strategies, resource constraints, or different allocations of inference traffic. A healthcare provider may compare the consequences of extending an update across institutions, while a recruitment provider may evaluate alternative canary-expansion strategies before changing the live deployment. Prospective governance compares counterfactual states and weightings to identify critical configurations before deployment, subject to uncertainty from scenario and anchor specification.

%%%%%%%%%%%%%%%%%%%%%%%%%%%%%5
\subsection{Interpreting Population-Level Governance Objects across Governance Modes}
\label{subsection:interpr_gov_objects}
The interpretation and use of the population objects from Section~\ref{section:population_governance} depend on the governance mode under consideration. First, \emph{anchor state selection} is part of the AI population governance workflow---see Figure~\ref{figure:population_governance_workflow}. In retrospective and operational governance, the  anchors are the distinct states actually occupied by the AI population at the relevant time. In prospective governance, anchors may be generated through scenarios instead. They may represent, for instance, a validated historical state to be simulated, a state expected after a planned update, an adverse state that could be occupied in the future, a state attainable under specified resource constraints, or a least-favourable state still considered acceptable for deployment. The provenance, evidence, assumptions, and constraints affecting anchor states should therefore be documented explicitly, as reflected in the AI population governance workflow in Figure~\ref{figure:population_governance_workflow} and Table~\ref{table:population_governance_strategy}.

%%%%%%%%%%%%%%%%%%%%%%%%%%%%%%%%%%%%%%%%%%%%%%%%%%%%%
% TABLE: THREE MODES OF POPULATION-LEVEL GOVERNANCE
%%%%%%%%%%%%%%%%%%%%%%%%%%%%%%%%%%%%%%%%%%%%%%%%%%%%%
\begin{table}[h!]
\centering

\begin{minipage}{0.98\textwidth}
\centering
\scriptsize
\setlength{\tabcolsep}{3pt}
\renewcommand{\arraystretch}{0.95}

\newcommand{\tblbullet}[1]{%
  \par\noindent
  \hangindent=0.55em
  \hangafter=1
  \makebox[0.55em][l]{%
    \raisebox{0.15ex}{\scalebox{0.52}{$\bullet$}}%
  }#1%
}

\begin{tabularx}{\textwidth}{
    >{\raggedright\arraybackslash}p{0.090\textwidth}
    >{\raggedright\arraybackslash}p{0.135\textwidth}
    >{\raggedright\arraybackslash}p{0.190\textwidth}
    >{\raggedright\arraybackslash}p{0.190\textwidth}
    >{\raggedright\arraybackslash}p{0.195\textwidth}
    >{\raggedright\arraybackslash}X
}
\toprule

\textbf{Mode}
&
\textbf{Purpose}
&
\textbf{Evidence and anchors}
&
\textbf{Procedure}
&
\textbf{Signals and actions}
&
\textbf{Risks}
\\

\midrule

%%%%%%%%%%%%%%%%%%%%%%%%%%%%%%%%%%%%%%%%%%%%%%%%%%%%%
% RETROSPECTIVE
%%%%%%%%%%%%%%%%%%%%%%%%%%%%%%%%%%%%%%%%%%%%%%%%%%%%%
\textbf{Retrospective}
&
\tblbullet{Audit, incident analysis}
\tblbullet{Reassessment}
&
\tblbullet{Archived states and  weights}
\tblbullet{Historical anchors}
&
\tblbullet{Reconstruct \(H_t(\mathcal P),\mu_t(\mathcal P)\) over time and anchor transformations in \(\mathcal U_\diam\)}
\tblbullet{Trace state, weight, barycenter and variance changes}
&
\tblbullet{TW level transitions,  heterogeneity changes}
\tblbullet{Anchor occupancy drops to zero}
\tblbullet{Incident audit or reassessment}
&
\tblbullet{Missing or changed records}
\tblbullet{Selection or protocol bias}
\\
\hline

%%%%%%%%%%%%%%%%%%%%%%%%%%%%%%%%%%%%%%%%%%%%%%%%%%%%%
% OPERATIONAL
%%%%%%%%%%%%%%%%%%%%%%%%%%%%%%%%%%%%%%%%%%%%%%%%%%%%%
\textbf{Operational}
&
\tblbullet{Lifecycle monitoring}
\tblbullet{Post-market control}
&
\tblbullet{Current states and weights}
\tblbullet{Occupied anchors and provenance}
&
\tblbullet{Update anchors, \(H_t(\mathcal P),\mu_t(\mathcal P)\)}
\tblbullet{Track TW level and heterogeneity over time}
&
\tblbullet{Crossings, new states, weight shifts}
\tblbullet{Review, restrict, or rollback}
&
\tblbullet{Noisy or delayed evidence}
\tblbullet{Aggregate or anchor masking}
\\
\hline

%%%%%%%%%%%%%%%%%%%%%%%%%%%%%%%%%%%%%%%%%%%%%%%%%%%%%
% PROSPECTIVE
%%%%%%%%%%%%%%%%%%%%%%%%%%%%%%%%%%%%%%%%%%%%%%%%%%%%%
\textbf{Prospective}
&
\tblbullet{Pre-deployment scenarios}
\tblbullet{Population update assessment}
&
\tblbullet{Evidence, constraints, assumptions}
\tblbullet{Candidate or counterfactual anchors}
&
\tblbullet{Generate states and weights over time across scenarios}
\tblbullet{Compare resulting \(H_{t'}(\mathcal P),\mu_{t'}(\mathcal P)\) and their metrics}
&
\tblbullet{Crossings or critical configurations}
\tblbullet{Adapt or block deployment}
&
\tblbullet{Scenario misspecification}
\tblbullet{Distribution shift}
\\
\hline
\end{tabularx}

\caption{
Three modes of population-level AI governance. All modes use the workflow in
Figure~\ref{figure:population_governance_workflow}. Abbreviation: TW, trustworthiness.
% but differ in purpose,
% evidence, procedures, signals/actions, and characteristic risks.
}
\label{table:population_governance_strategy}

\end{minipage}
\end{table}
%%%%%%%%%%%%%%%%%%%%%%%%%%%%%%%%%%%%%%%%%%%%%%%%%%%%%

Depending on the governance question, \emph{population weights} \(\lambda_i(t)\)  may represent fractions of deployed instantiations, processed cases, affected individuals, inference traffic, or proportions of a controlled deployment portfolio. Different weighting conventions therefore answer different governance questions even when the anchor states remain unchanged. In the healthcare example, weighting deployments equally describes the distribution of installations, whereas weighting them by processed examinations represents case-level exposure. In the recruitment example, inference-traffic weights represent the proportion of applications processed by each active variant.
While the anchors determine which states enter the population representation, the weights  determine their governance relevance.

Finally, \emph{the state-population distribution \(\mu_t(\mathcal P)\) and the convex hull \(H_t(\mathcal P)\) play different governance roles}. The distribution \(\mu_t(\mathcal P)\) is the statistical representation of the \emph{realized} population under the selected weighting convention in the retrospective or operational governance modes and the statistical representation of a \emph{candidate} population in the prospective mode, recording how governance-relevant weights are distributed across the (observed or candidate) anchor states. Its barycenter summarizes the aggregate quantified profile associated with those weights, while its weighted variance summarizes the heterogeneity of the population around that aggregate profile. The convex hull \(H_t(\mathcal P)\) has a more abstract role. Its generators are the anchor states, but the remaining points of the hull need not correspond to states occupied by any AI instantiation. They are aggregate profiles obtained by taking convex combinations of the
anchor profiles in \(I_P\) under alternative weightings of a fixed anchor set. When all profiles lie in a common convex fiber \(C_k\), these combinations also coincide with the compositions of the maps \(\omega_{\alpha,k}\) introduced in \StepII~ and remain within \(C_k\). The governance interpretation of the hull depends on the mode. Retrospectively and operationally, it distinguishes changes in anchor support from
redistributions of AI population allocations as in Figure~\ref{figure:cube_zoom_c3_c5_hulls}.  In the healthcare example, analogous distinctions arise between changes in case exposure across otherwise unchanged hospital states and changes in the states occupied by the hospitals themselves. In the recruitment example, reallocating inference traffic among unchanged production, canary, and alternative variants changes the state-population distribution and may move its barycenter within the same hull. Introducing, withdrawing, or changing a variant modifies the anchor set and may instead modify the hull itself. In prospective governance, the convex hull defines a
\emph{counterfactual space} for comparing alternative allocations, planned updates, and stress-test scenarios for an AI population of interest. In any governance mode, a normative interpretation of the convex hull requires additional governance choices, as legal, institutional, technical, or resource constraints may restrict admissible weights to a subset of \(\Delta^{N-1}\), whose image
defines an admissible region within \(H_t(\mathcal P)\). Thus, in AI population governance, the full hull is admissible only when all weightings are permitted.

Finally, AI population governance objects should also be monitored through their \emph{changes over time}---see Figure~\ref{figure:population_governance_workflow}. For two observation times \(t<t'\), simple temporal governance metrics include changes in population allocation \(\|\lambda(t')-\lambda(t)\|_1\), barycenter displacement \(\|\bar q_{\mu_{t'}(\mathcal P)}-\bar q_{\mu_t(\mathcal P)}\|_2\), and change in heterogeneity \(|V_{\mu_{t'}(\mathcal P)}-V_{\mu_t(\mathcal P)}|\). Such deltas, and their accumulation over time, form part of the governance metric toolbox summarized in Table~\ref{table:population_governance_strategy}. For instance, consider operational governance over an interval in which the anchor set \(X=\{x_1,\ldots,x_N\}\) remains fixed, while AI instantiations migrate among these states through transformations in \(\mathcal U_\diam\) that preserve the relevant trustworthiness level as in Figure~\ref{figure:cube_zoom_c3_c5_hulls}(c). (These transformations need to be documented, as specified in Figure~\ref{figure:population_governance_workflow}.) Thus, the corresponding transitions establish \emph{reachability} between the affected anchor states, while the \(\lambda_i(t)\) may change substantially. As a result, \(\mu_t(\mathcal P)\), its barycenter, and its variance evolve, whereas the hull remains geometrically unchanged.
As a result, a change in weights can modify \(\mu_t(\mathcal P)\), its barycenter, and its variance while leaving \(H_t(\mathcal P)\) unchanged; a change in the anchor states may modify the hull instead.  Importantly, AI population governance over time does not replace anchor-level inspection. In fact, an operationally risky but low-weight state may have little influence on the barycenter. Aggregation can mask harms that are localized across deployments and over time.

%%%%%%%%%%%%%%%%%%%%%%%%%%%%%%%%%%%%%%%%%%%%%%%%%%%%%
\section{Final Recommendations and Conclusions}
\label{section:recomm_conclusion}
AI governance increasingly concerns populations of related AI system instantiations that coexist across deployments and evolve over time, while regulation remains predominantly single-system-centric. Our approach addresses this governance gap through two sequential steps covering retrospective, operational, and prospective AI governance modes by introducing \emph{meaningful} compositions of AI states and standard mathematical objects that are relevant to the governance of AI populations. The proposed workflow and governance modes recommend that governance actors (1) define and document the comparison domain before aggregating evidence across AI instantiations; (2) document anchor provenance and weighting
semantics, and distinguish changes in states, population support, and
operational exposure in relation to the selected governance mode; and (3) use population-level objects together with anchor-level inspection when deciding whether monitoring, reassessment, or corrective action is warranted. Future work should evaluate the AI population workflow in domain-specific deployments, testing whether and how its population-level representations support  institutional procedures across deployments and over time, regulatory requirements, and legal triggers.

%%%%%%%%%%%%%%%%%%%%%%%%%%%%%%%%%%%%%%%%%%%%%%%%%%%%%
%%%%%%%%%%%%%%%%%%%%%%%%%%%%%%%%%%%%%%%%%%%%%%%%%%%%%
\section*{Generative AI Usage Statement}
ChatGPT (GPT-5.6 Sol, OpenAI) was used during manuscript preparation to assist with language editing, spelling, and checks of the logical consistency of the manuscript. It was also used to generate the TikZ code for the figures, support literature searches on trustworthiness measures, draft an initial version of Table 3 in Appendix B, and check the completeness of the references. All AI-assisted outputs were subsequently reviewed, verified, and revised by the author, who takes full responsibility for the final manuscript.

%%%%%%%%%%%%%%%%%%%%%%%%%%%%%%%%%%%%%%%%%%%%%%%%%%%%%%%%%%%%%
\section*{Acknowledgments}
We acknowledge partial support by the Swiss National Science Foundation (SNSF), grant no. 229061.

%%%%%%%%%%%%%%%%%%%%%%%%%%%%%%%%%%%%%%%%%%%%%%
\clearpage
\appendix
%%%%%%%%%%%%%%%%%%%%%%%%%%%%%%%%%%%%%%%%%%%%%%%%%
\section{Step I: Summary of the Primitives and AI Identity Relations}
\label{app:primitive_comp}
Table~\ref{table:formal_primitives} collects the primitive components presented in Section~\ref{section:category_account}. The content is adapted from \citet{Ferrario2026CategoryIdentity}.

\begin{table}[ht]
\centering
\footnotesize
\renewcommand{\arraystretch}{1.22}
\begin{tabularx}{\textwidth}{
    >{\raggedright\arraybackslash}p{0.20\textwidth}
    >{\raggedright\arraybackslash}X
}
\toprule
\textbf{Symbol} & \textbf{Role in the formalization} \\
\midrule

\(F\) &
\textbf{Techno-function}: the designed technical capability that fixes the AI system kind at the selected level of abstraction. \\

\(P=\{p_1,\ldots,p_n\}\) &
\textbf{Trustworthiness profile}: the selected dimensions together with their requirements, measurement conventions, evidential conditions, normalization rules, and aggregation procedures. \\

\(q\in I_P=[0,1]^n\) &
\textbf{Quantified profile}: a particular assessment produced under \(P\). Unlike \(F,P,L_P\), it varies across states, deployments, copies, variants, and times. \\

\(L_P:I_P\to[K]\) &
\textbf{Trustworthiness level function}: the auditable rule assigning quantified profiles to a finite collection of governance-relevant levels \citep{ferrario2026methodology}. \\

\(x=(q_x,L_P(q_x))\in S_\diam\) &
\(\diam\)\textbf{-relative state}: the abstract state determined by a quantified profile and the level assigned to it. \\

\(\mathcal U_\diam\) &
\textbf{Primitive lifecycle vocabulary}: the admissible elementary transformation types considered under the fixed datum \(\diam\), each represented by a relation on \(S_\diam\). \\

\bottomrule
\end{tabularx}
\caption{Primitive components used in the present paper. Adapted from \citet{Ferrario2026CategoryIdentity}; notation and descriptions are aligned with the present construction.}
\label{table:formal_primitives}
\end{table}

%%%%%%%%%%%%%%%%%%%%%%%%%%%%%%%%%%%%%%%%%%%%%%%%%%%%%%%%%%%%
%%%%%%%%%%%%%%%%%%%%%%%%%%%%%%%%%%%%%%%%%%%%%%%%%%%%%%%%%%%%
\section{Trustworthiness Measures and Profile Encoding for Convex Composition}
\label{appendix:metric_affinity}

The quantified profile \(q\in I_P=[0,1]^n\) used in \StepII~ depends on the measurement conventions fixed by the trustworthiness protocol \(P\).
To illustrate how heterogeneous evaluations can enter this common profile space, Table~\ref{table:metric_affine_screening} synthesizes representative
trustworthiness dimensions and measures from the surveys and reviews of
\citet{kemmerzell2025towards}, \citet{kaur2022trustworthy},
\citet{li2023trustworthy}, and \citet{diaz2023connecting}. We additionally include the AI-agent reliability framework of
\citet{rabanser2026towards}, which operationalizes reliability through
consistency, robustness, predictability, and safety measures. (Here, we do not engage in an in-depth discussion on the relation between reliability and trustworthiness of AI systems. While trustworthiness includes
correct functioning in the sense captured by reliability, it is not exhausted by it. Trustworthiness expands on reliability motivating the definition, selection and operationalization of its dimensions through
broader socio-technical and normative considerations.)

For each selected measure \(m_j\), the protocol \(P\) specifies a numerical representation \(s_j:m_j\longmapsto q_j\in[0,1],\) together with its measurement, normalization, and orientation conventions. Throughout the profile, higher values of \(q_j\) are interpreted as better
with respect to the corresponding requirement. Measures for which lower values are preferable are therefore reversed after normalization, while measures with a target optimum can be represented through a documented distance-to-target score. Ordinal or categorical evaluations require an explicit numerical embedding before entering the quantified profile.
Convex composition from \StepII is performed on these quantified coordinates. Here, an interpolated coordinate through the maps \(\omega_{\alpha,k}\) in \eqref{eq:convex_comp_states} is an abstract value of a quantified profile generated through convex composition. Thus, it need not correspond to an empirically observed raw metric value, nor to the value that would result from recomputing that metric on pooled data. In summary, all these geometric constructions are relative to the numerical representation fixed by \(P\). In particular, numerical embedding alone does not imply that an ordinal or categorical measure has an intrinsic affine interpretation:
alternative admissible encodings may change convex combinations, barycenters, distances, variances, and the geometry of level regions.
Therefore, \(P\) should document the interpretation of the chosen representation and, where multiple reasonable encodings exist, the sensitivity of governance conclusions to that choice.
\begin{table}[h!]
\centering

\begin{minipage}{0.98\textwidth}
\centering
\scriptsize
\setlength{\tabcolsep}{3pt}
\renewcommand{\arraystretch}{0.95}

\newcommand{\metricbullet}[1]{%
  \par\noindent
  \hangindent=0.55em
  \hangafter=1
  \makebox[0.55em][l]{%
    \raisebox{0.15ex}{\scalebox{0.52}{$\bullet$}}%
  }#1%
}

\begin{tabularx}{\textwidth}{
    >{\raggedright\arraybackslash}p{0.185\textwidth}
    >{\raggedright\arraybackslash}p{0.475\textwidth}
    >{\raggedright\arraybackslash}X
}
\toprule

\textbf{TW dimension}
&
\textbf{Representative evaluation measures / strategies}
&
\textbf{Representation in \(P\)}
\\

\midrule

% ------------------------------------------------------------
% TECHNICAL ROBUSTNESS / SAFETY
% ------------------------------------------------------------
\textbf{Technical robustness \& safety}
&
\metricbullet{Accuracy, AUROC}
\metricbullet{Brier score, NLL, ECE}
\metricbullet{Performance under distribution shift}
\metricbullet{Certified accuracy}
\metricbullet{Attack success and fooling rates}
\metricbullet{OOD detection confidence scores}
&
Higher-is-better performance measures are normalized directly; losses,
errors, and attack rates are reversed. \(P\) fixes the test population,
shift/attack regime, and evaluation protocol.
\\

\hline

% ------------------------------------------------------------
% FAIRNESS
% ------------------------------------------------------------
\textbf{Diversity, non-discrimination \& fairness}
&
\metricbullet{Demographic/statistical parity difference}
\metricbullet{Disparate impact}
\metricbullet{Equal opportunity}
\metricbullet{Equalized odds}
\metricbullet{Predictive parity}
\metricbullet{Generalized Entropy Index}
&
Fairness measures are generally target-based. \(P\) fixes protected groups,
thresholds, targets, and the evaluation population.
\\

\hline

% ------------------------------------------------------------
% TRANSPARENCY
% ------------------------------------------------------------
\textbf{Transparency \& explainability}
&
\metricbullet{Faithfulness and fidelity}
\metricbullet{Explanation robustness}
\metricbullet{Sparsity and complexity}
\metricbullet{Information loss and gain}
\metricbullet{Human task accuracy}
\metricbullet{Task time and satisfaction}
&
Quantitative and human-evaluation measures are normalized according to
their direction. Ordinal or subjective scales require a documented
numerical encoding.
\\

\hline

% ------------------------------------------------------------
% PRIVACY
% ------------------------------------------------------------
\textbf{Privacy \& data governance}
&
\metricbullet{\(k\)-anonymity, \(l\)-diversity, \(t\)-closeness}
\metricbullet{Differential-privacy \(\epsilon,\delta\)}
\metricbullet{Membership-inference attack success rate and TPR at low FPR}
\metricbullet{Model-extraction fidelity/performance}
\metricbullet{Model-inversion performance}
&
\(P\) fixes the privacy mechanism, threat model, and admissible ranges.
Lower-is-better privacy-loss and attack measures are reversed.
\\

\hline

% ------------------------------------------------------------
% HUMAN OVERSIGHT
% ------------------------------------------------------------
\textbf{Human agency \& oversight}
&
\metricbullet{Human error rate}
\metricbullet{Task-completion time}
\metricbullet{Cognitive load}
\metricbullet{Number of user actions}
\metricbullet{Compliance rate}
\metricbullet{Trust, reliance, and trust calibration}
\metricbullet{User satisfaction}
&
Performance and usability measures are oriented consistently; reliance and
trust may require target-based or psychometric encoding.
\\

\hline

% ------------------------------------------------------------
% ACCOUNTABILITY
% ------------------------------------------------------------
\textbf{Accountability \& auditability}
&
\metricbullet{Attribution analysis}
\metricbullet{Algorithmic auditing}
\metricbullet{Legal and regulatory compliance assessment}
\metricbullet{Remediation and redress evaluation}
\metricbullet{Impact assessments}
&
These are primarily qualitative, categorical, or procedural evaluations;
few established quantitative metrics exist. Numerical inclusion in \(P\)
therefore requires an explicit operationalization.
\\

\hline

% ------------------------------------------------------------
% SOCIETAL / ENVIRONMENTAL
% ------------------------------------------------------------
\textbf{Societal \& environmental well-being}
&
\metricbullet{CO$_2$e footprint}
\metricbullet{Energy and resource consumption}
\metricbullet{Model runtime}
\metricbullet{Hardware and infrastructure efficiency}
\metricbullet{Recycling and reuse rates}
&
Physical quantities are normalized under a fixed functional unit.
Lower-is-better environmental measures are reversed.
\\

\hline

% ------------------------------------------------------------
% AGENT CONSISTENCY
% ------------------------------------------------------------
\textbf{AI-agent reliability: consistency}
&
\metricbullet{Outcome consistency}
\metricbullet{Trajectory consistency---distributional}
\metricbullet{Trajectory consistency---sequential}
\metricbullet{Resource consistency}
&
Scores are already in \([0,1]\), with higher values indicating greater
consistency. \(P\) fixes repeated-run conditions and trajectory/resource
definitions.
\\

\hline

% ------------------------------------------------------------
% AGENT ROBUSTNESS
% ------------------------------------------------------------
\textbf{AI-agent reliability: robustness}
&
\metricbullet{Fault robustness}
\metricbullet{Environment robustness}
\metricbullet{Prompt robustness}
&
Scores are already in \([0,1]\), with higher values indicating greater
robustness. \(P\) fixes the baseline and perturbation regimes.
\\

\hline

% ------------------------------------------------------------
% AGENT PREDICTABILITY
% ------------------------------------------------------------
\textbf{AI-agent reliability: predictability}
&
\metricbullet{Calibration}
\metricbullet{Confidence discrimination and AUROC}
\metricbullet{Brier-based predictability}
&
Scores are already in \([0,1]\), with higher values indicating greater
predictability. \(P\) fixes outcomes, confidence elicitation, and the
evaluation population.
\\

\hline

% ------------------------------------------------------------
% AGENT SAFETY
% ------------------------------------------------------------
\textbf{AI-agent reliability: safety}
&
\metricbullet{Constraint compliance}
\metricbullet{Harm severity conditional on violation}
&
Scores are already in \([0,1]\), with higher values indicating greater
safety. \(P\) fixes operational constraints and the harm-severity scale.
\\

\bottomrule
\end{tabularx}

\vspace{0.5em}

\parbox{0.96\textwidth}{%
\scriptsize
\textit{Encoding convention.}
Under protocol \(P\), measures are represented so that higher
\(q_j\) indicates better attainment of the corresponding requirement.
Higher-is-better measures are normalized directly, lower-is-better
measures are normalized and reversed, and target-based measures are
encoded by closeness to the specified target. Ordinal, categorical, or
qualitative evaluations require an explicitly documented numerical
encoding.
}

\caption{
Representative trustworthiness evaluation measures and strategies that may
be used to construct the quantified profile \(q\) under protocol \(P\).
The first seven rows draw primarily on the quantitative evaluation catalogue
of \citet{kemmerzell2025towards}, with broader trustworthiness framing and
additional examples from \citet{kaur2022trustworthy},
\citet{li2023trustworthy}, and \citet{diaz2023connecting}.
The final four rows summarize the AI-agent reliability dimensions and
metrics of \citet{rabanser2026towards}.
}
\label{table:metric_affine_screening}

\end{minipage}
\end{table}

%%%%%%%%%%%%%%%%%%%%%%%%%%%%%%%%%%%%%%%%%%
%%%%%%%%%%%%%%%%%%%%%%%%%%%%%%%%%%%%%%%%%%
\newpage
\section{Convex Spaces: A Universal Algebra Perspective}
\label{appendix:barycentric_basics}
In this appendix, we introduce the universal algebra background of \emph{convex spaces}, equivalently \emph{barycentric algebras} \citep{fritz2009convex}, following  \citet{smith2011modes} and \citet{zamojska2024barycentric}.
Then, we introduce an example of a convex space, namely, the free barycentric algebra over a set. Finally, we relate the free barycentric algebra to the convex hull we discussed in Section~\ref{section:population_governance}.

%%%%%%%%%%%%%%%%%%%%%%%%%%%%%%%%%%%%%%%%%%%
\subsection{Convex Spaces or Barycentric Algebras: A Universal Algebra Perspective}
In universal algebra, an algebra is a pair \((A,\Omega)\), where \(A\) is a set and \(\Omega\) is a family of finitary operations on \(A\). An operation \(\omega\in\Omega\) of arity \(r\) is a map \(\omega:A^r\longrightarrow A.\)
An algebra \((A,\Omega)\) is \emph{idempotent} when, for every \(r\)-ary operation \(\omega\in\Omega\), \(\omega(x,\ldots,x)=x.\)
Equivalently, every singleton \(\{a\}\subseteq A\) is a subalgebra \citep{smith2011modes}.

Let \(\omega,\omega'\in\Omega\) have arities \(m\) and \(n\), respectively. The algebra \(A\) is \emph{entropic} when every pair of operations satisfies the interchange identity
\begin{align}
\omega\Bigl(
    \omega'(x_{11},\ldots,x_{1n}),
    \ldots,
    \omega'(x_{m1},\ldots,x_{mn})
    \Bigr)
=
\omega'\Bigl(
    \omega(x_{11},\ldots,x_{m1}),
    \ldots,
    \omega(x_{1n},\ldots,x_{mn})
    \Bigr).
\label{eq:entropic_algebra}
\end{align}
Thus, if the elements \(x_{ij}\) are arranged as a matrix, applying the operations first row-by-row and then column-by-column yields the same result as applying them first column-by-column and then row-by-row. An algebra that is both idempotent and entropic is called a \emph{mode}. 

\begin{definition}[Convex space or barycentric algebra \citep{fritz2009convex,zamojska2024barycentric}]
\label{def:barycentric_alg}
A \emph{convex space}, equivalently a \emph{barycentric algebra}, is an
algebra \((A,\Omega)\), with
\(\Omega=\{\omega_\alpha\mid\alpha\in[0,1]\},\)
whose binary operations \(\omega_\alpha:A\times A\to A\) satisfy \(\omega_1(x,y)=x\) and, for all \(x,y,z\in A\) and \(\alpha,\beta\in[0,1]\),
\begin{align*}
\omega_\alpha(x,x)
    &=x,
    &&\text{\emph{(idempotency)}}
    \\
\omega_\alpha(x,y)
    &=\omega_{1-\alpha}(y,x),
    &&\text{\emph{(skew-commutativity)}}
    \\
\omega_\beta\!\left(\omega_\alpha(x,y),z\right)
    &=
    \omega_{\alpha\beta}\!\left(
        x,
        \omega_{\frac{\beta(1-\alpha)}
        {1-\alpha\beta}}(y,z)
    \right),
    &&\text{\emph{(skew-associativity)}},
    \quad \alpha\beta\neq1.
\end{align*}
\end{definition}
Note that, if \(\alpha\beta=1\), then \(\alpha=\beta=1,\) \(\alpha,\beta\in[0,1]\). In this case, skew-associativity is equivalent to \(\omega_1(x,\omega_\gamma(y,z))=x\) for any \(\gamma\in[0,1]\).

\begin{proposition}
Every convex space \((A,\Omega)\), with
\(\Omega=\{\omega_\alpha\mid\alpha\in[0,1]\}\), is a mode.
\end{proposition}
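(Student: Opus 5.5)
Idempotency is one of the axioms of Definition~\ref{def:barycentric_alg}, so the only work is to prove entropicity. The plan is to check the interchange identity~\eqref{eq:entropic_algebra} for pairs of binary operations. Concretely, for all \(\alpha,\beta\in[0,1]\) and \(x,y,z,w\in A\) I will establish
\begin{align*}
\omega_\alpha\bigl(\omega_\beta(x,y),\omega_\beta(z,w)\bigr)
=
\omega_\beta\bigl(\omega_\alpha(x,z),\omega_\alpha(y,w)\bigr).
\end{align*}

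The approach is to reduce everything to the canonical vector-space model. I would first dispose of the degenerate cases. If \(\alpha\in\{0,1\}\) or \(\beta\in\{0,1\}\), the identity follows from \(\omega_1(x,y)=x\) and \(\omega_0(x,y)=y\), the latter obtained via skew-commutativity.

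For \(\alpha,\beta\in(0,1)\) there are two possible routes, and I will take the second as the main line.

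The first route is to invoke the known representation theorem for barycentric algebras: every convex space satisfies exactly the identities that hold in convex subsets of real vector spaces. In the canonical model of Section~\ref{subsection:convex_spaces}, both sides of the identity equal \(\alpha\beta x+\alpha(1-\beta)y+(1-\alpha)\beta z+(1-\alpha)(1-\beta)w\), so the identity would transfer.

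The second route is a self-contained derivation from the axioms. Skew-associativity lets me rewrite each side in a normal form. I would expand the left side by skew-associativity as \(\omega_{\alpha\beta}\bigl(x,\omega_{\gamma}(y,\omega_\beta(z,w))\bigr)\) with \(\gamma=\frac{\alpha(1-\beta)}{1-\alpha\beta}\). Then I would apply skew-associativity again in the reverse direction, together with skew-commutativity, to bring the inner term to the form \(\omega_{\cdot}\bigl(\omega_{\cdot}(y,z),w\bigr)\). Next I would swap \(y\) and \(z\) inside \(\omega_{\cdot}(y,z)\) using skew-commutativity. Finally I would reassemble the expression by skew-associativity into the right-hand side.

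The main obstacle is the coefficient bookkeeping in this chain. At each step I need to verify that the parameters produced by skew-associativity agree with the target weights \(\alpha\beta,\ \alpha(1-\beta),\ (1-\alpha)\beta,\ (1-\alpha)(1-\beta)\), and that no denominator of the form \(1-\alpha\beta\) vanishes. The nonvanishing is guaranteed for \(\alpha,\beta<1\), which is why the boundary cases are handled separately. For the coefficient matching, I would track the induced weights on \((x,y,z,w)\) after each rewrite. A given nested term is determined by its weight vector, and both sides produce the same vector, so the chain closes.
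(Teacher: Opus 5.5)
Your proposal is correct and follows the paper's route. The paper's proof only notes that idempotency is an axiom and asserts that entropicity follows from skew-commutativity and skew-associativity; your second route carries out that computation explicitly, and both sides of the interchange identity reduce, for $\alpha,\beta\in(0,1)$, to $\omega_{\alpha\beta}\bigl(x,\omega_{b}(\omega_{c}(z,y),w)\bigr)$ with $b=\frac{\alpha+\beta-2\alpha\beta}{1-\alpha\beta}$ and $c=\frac{\beta(1-\alpha)}{\alpha+\beta-2\alpha\beta}$. Two points to fix when you write it out: the reverse use of skew-associativity also divides by $\alpha+\beta-2\alpha\beta$, which is positive in the interior, and you should close the chain by matching the parameters of these two same-shape terms (the weight vector determines them uniquely) rather than by the general claim that every term is determined by its weight vector, since that claim is essentially the normal-form theorem for barycentric algebras and would undercut the self-contained character of the argument.
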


\begin{proof}
Idempotence of every operation \(\omega_\alpha\in\Omega\) follows from their definition.
Entropicity in the sense of
Equation~\eqref{eq:entropic_algebra} follows from skew-commutativity and skew-associativity.
\end{proof}

Definition~\ref{def:barycentric_alg} abstracts the behaviour of ordinary convex combinations. Let \(C\) be a convex subset of a real vector space \(V\). For every \(\alpha\in[0,1]\), define
\(\omega^C_\alpha(x,y):=\alpha x+(1-\alpha)y.\)
Convexity of \(C\) guarantees that \(\omega^C_\alpha(x,y)\in C\) whenever \(x,y\in C\). The family \(\Omega=\{\omega^C_\alpha\mid\alpha\in[0,1]\}\) therefore equips \(C\) with a convex space structure. This is the canonical geometric example, but a convex space is an abstract algebraic structure and does not require an ambient vector space \citep{zamojska2024barycentric}. Note that we follow the coefficient convention of \citet{fritz2009convex}, according to which \(\omega_\alpha(x,y)=\alpha x+(1-\alpha)y\) in the standard vector-space example. The notation of \citet{zamojska2024barycentric} uses \(\omega_\alpha(x,y)=(1-\alpha)x+\alpha y\) instead. Accordingly, the coefficients appearing in skew-commutativity and skew-associativity relations in Definition~\ref{def:barycentric_alg} must be transformed by the corresponding change of parameter. For convex subsets \(C\) of real vector spaces, the operations \(\omega^C_\alpha\)  additionally satisfy the \emph{cancellation property} \(
\omega^C_\alpha(x,y)=\omega^C_\alpha(x,z)\Longrightarrow y=z,\ \alpha\neq 1\) \citep{zamojska2024barycentric}.

%%%%%%%%%%%%%%%%%%%%%%%%%%%%%%%%%%%%%%%%%%%%%%%%%%%%%
\subsection{The Free Barycentric Algebra}
\label{subsection:free_barycentric_algebra}
Another example of a convex space is the \emph{free barycentric algebra} \citep{smith2011modes}. Let \(A\) be an arbitrary set. The free barycentric algebra over \(A\),
denoted by \(F_B(A)\), can be represented by formal finite convex
combinations
\begin{align*}
F_B(A)
:=
\left\{
\sum_{a\in A}\lambda_a a
\;\middle|\;
\lambda_a\in[0,1],\
\lambda_a=0\text{ for all but finitely many }a,\
\sum_{a\in A}\lambda_a=1
\right\}.
\end{align*}
Equivalently, its elements can be represented as finitely supported probability functions,
\begin{align*}
F_B(A)
\cong
\left\{
\mu:A\to[0,1]
\;\middle|\;
\supp(\mu)\text{ is finite},\
\sum_{a\in A}\mu(a)=1
\right\}.
\end{align*}

For \(a\in A\), let the Dirac function \(\delta_a:A\to[0,1]\) be defined by
\[
\delta_a(a')
=
\begin{cases}
1, & a'=a,\\
0, & a'\neq a.
\end{cases}
\]
Therefore, every element of \(F_B(A)\) can be written as
\[
\mu=\sum_{a\in A}\lambda_a\delta_a, \quad
\sum_{a\in A}\lambda_a=1,\] 
with finite support. Under this representation, the convex space operations act pointwise:
\[
\omega^{F_B(A)}_\alpha(\mu,\nu)
:=
\alpha\mu+(1-\alpha)\nu.
\]

The canonical map
\(\eta_A:A\longrightarrow F_B(A), a\longmapsto\delta_a,\)
embeds the generators into the free algebra. For a finite set \(A=\{a_1,\ldots,a_N\},\) the free barycentric algebra can be identified with the standard simplex
\begin{align*}
F_B(A)
\cong
\Delta^{N-1}
:=
\left\{
(\lambda_1,\ldots,\lambda_N)\in[0,1]^N
\;\middle|\;
\sum_{i=1}^{N}\lambda_i=1
\right\}.
\end{align*}
Thus, a point of the simplex specifies a formal distribution of weight across the \(N\) generators. This free construction is directly related to the geometric convex hull
introduced in Section~\ref{section:population_governance}. The simplex \(F_B(A)\) describes all \emph{formal} convex combinations of the generators, whereas their convex hull describes the points obtained after those formal combinations
are evaluated in the ambient convex space. The relation between the two is given by a canonical evaluation map.

%%%%%%%%%%%%%%%%%%%%%%%%%%%%%%%%%%%%%%%%%%%%%%%%%%%%%
\subsection{The Free Barycentric Algebra and the Convex Hull}
Let \(X=\{x_1,\ldots,x_N\}\subseteq S_{\diam,k}, x_i=(q_{x_i},k),\)
and let \(\lambda=(\lambda_1,\ldots,\lambda_N)\in\Delta^{N-1}.\) When the resulting quantified profile belongs to \(C_k\), define the finite convex composition
\[
\omega_{\lambda,k}(x_1,\ldots,x_N)
:=
\left(
\sum_{i=1}^{N}\lambda_iq_{x_i},
k
\right).
\]
If \(C_k\) is convex, all such compositions remain in \(S_{\diam,k}\), and the barycentric hull generated by \(X\) is
\begin{align*}
\operatorname{Hull}_k(X)
:=
\left\{
\omega_{\lambda,k}(x_1,\ldots,x_N)
\mid
\lambda\in\Delta^{N-1}
\right\}
\subseteq S_{\diam,k}.
\end{align*}

At the level of quantified profiles, this is precisely the convex hull introduced in \eqref{eq:convex_hull}, Section~\ref{section:population_governance}. The connection with the free barycentric algebra is made explicit by the following map.

\begin{proposition}
\label{prop:canonical_evaluation}
Let \(X=\{x_1,\ldots,x_N\}\subseteq S_{\diam,k}\) and assume that
\(C_k\) is convex. The map
\begin{align*}
\beta_X:F_B(X)&\longrightarrow S_{\diam,k},\quad
\sum_{i=1}^{N}\lambda_i\delta_{x_i}\mapsto
\left(\sum_{i=1}^{N}\lambda_iq_{x_i}, k \right)
\end{align*}
is a convex space homomorphism. Its image is \(\operatorname{Hull}_k(X)\).
\end{proposition}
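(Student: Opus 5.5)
The plan has three parts: show that \(\beta_X\) is well defined, check that it commutes with the binary operations, and identify its image. Throughout, \(S_{\diam,k}\) carries the convex space structure \(\{\omega_{\alpha,k}\}_\alpha\) of Proposition~\ref{prop:convex_comp_states}, and \(F_B(X)\) carries the pointwise operations \(\omega^{F_B(X)}_\alpha(\mu,\nu)=\alpha\mu+(1-\alpha)\nu\).

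\emph{Well-definedness.} Since \(X\) is a finite set of distinct states, every \(\mu\in F_B(X)\) has a unique representation \(\mu=\sum_{i}\lambda_i\delta_{x_i}\) with \(\lambda=(\lambda_1,\ldots,\lambda_N)\in\Delta^{N-1}\); the coefficients are simply \(\lambda_i=\mu(x_i)\). Each \(q_{x_i}\) lies in \(C_k\), and \(C_k\) is convex. A convex set contains all finite convex combinations of its points, which follows by a routine induction on \(N\) from the binary case. Hence \(\sum_i\lambda_iq_{x_i}\in C_k\), and so \(\beta_X(\mu)=(\sum_i\lambda_iq_{x_i},k)\in S_{\diam,k}\).

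\emph{Homomorphism.} Take \(\mu=\sum_i\lambda_i\delta_{x_i}\) and \(\nu=\sum_i\lambda'_i\delta_{x_i}\), and fix \(\alpha\in[0,1]\). Then
\[
\omega^{F_B(X)}_\alpha(\mu,\nu)=\sum_i\bigl(\alpha\lambda_i+(1-\alpha)\lambda'_i\bigr)\delta_{x_i},
\]
and applying \(\beta_X\) gives \(\bigl(\sum_i(\alpha\lambda_i+(1-\alpha)\lambda'_i)q_{x_i},k\bigr)\). Expanding by linearity in \(\mathbb R^n\), this equals
\[
\Bigl(\alpha\sum_i\lambda_iq_{x_i}+(1-\alpha)\sum_i\lambda'_iq_{x_i},\,k\Bigr)=\omega_{\alpha,k}\bigl(\beta_X(\mu),\beta_X(\nu)\bigr)
\]
by \eqref{eq:convex_comp_states}. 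This is exactly the condition for a homomorphism of convex spaces, that is, a map preserving every operation \(\omega_\alpha\).

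\emph{Image.} Under the identification \(F_B(X)\cong\Delta^{N-1}\), \(\beta_X\) sends \(\lambda\) to \(\omega_{\lambda,k}(x_1,\ldots,x_N)\). Its image is therefore \(\{\omega_{\lambda,k}(x_1,\ldots,x_N)\mid\lambda\in\Delta^{N-1}\}\), which is \(\operatorname{Hull}_k(X)\) by definition. Both inclusions are immediate.

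The only step needing care is well-definedness. The formal combination must have unique coefficients, which holds because the \(x_i\) are distinct and \(\mu\) is a function on \(X\). The resulting profile must also land in \(C_k\), which is where convexity of \(C_k\) enters, through the finite-combination extension of Proposition~\ref{prop:convex_comp_states}. Everything else is linearity of convex combinations in \(\mathbb R^n\).
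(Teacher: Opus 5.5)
Your proposal is correct and follows the paper's proof essentially step for step. Well-definedness comes from convexity of \(C_k\), the homomorphism identity \(\beta_X(\omega^{F_B(X)}_\alpha(\mu,\nu))=\omega_{\alpha,k}(\beta_X(\mu),\beta_X(\nu))\) comes from linearity, and the image is read off as \(\operatorname{Hull}_k(X)\) by letting \(\lambda\) range over \(\Delta^{N-1}\); you additionally make explicit the uniqueness of coefficients and the induction from binary to finite convex combinations, which the paper leaves implicit.
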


\begin{proof}
Convexity of \(C_k\) makes \(\beta_X\) well defined. Moreover, for \(\mu,\nu\in F_B(X)\),
\[
\beta_X\!\left(\omega^{F_B(X)}_\alpha(\mu,\nu)\right)
=
\omega_{\alpha,k}\!\left(
\beta_X(\mu),
\beta_X(\nu)
\right),
\]
so \(\beta_X\) preserves the convex operations. Allowing \((\lambda_1,\ldots,\lambda_N)\) to vary over
\(\Delta^{N-1}\) produces \(\operatorname{Hull}_k(X)\).
\end{proof}

%%%%%%%%%%%%%%%%%%%%%%%%%%%%%%%%%%%%%%%%%%
%%%%%%%%%%%%%%%%%%%%%%%%%%%%%%%%%%%%%%%%%%
\section{An Example of Trustworthiness Level Function with Convex and Non-Convex Preimages}
\label{appendix:non_convex}
We provide a simple two-dimensional example illustrating a case in which two fibers \(C_k=L_P^{-1}(k)\) of a trustworthiness level function \(L_P\) are non-convex. As a result, convex combinations of profiles lying in the same fiber need not remain in that fiber. Consider the trustworthiness level function
\(L_P:[0,1]^2\to[3]:=\{1,2,3\}\)
defined by
\[
L_P(x_1,x_2)=
\begin{cases}
1, & \text{if } x_1<0.6 \text{ or } x_2<0.6,\\
2, & \text{if } x_1\geq 0.6,\ x_2\geq 0.6,\ \text{and }(x_1<0.75 \text{ or } x_2<0.75),\\
3, & \text{if } x_1\geq 0.75 \text{ and } x_2\geq 0.75.
\end{cases}
\]
Its fibers are 
\begin{align*}
&C_1=\{(x_1,x_2)\in[0,1]^2 : x_1<0.6 \text{ or } x_2<0.6\},\\
&C_2=\{(x_1,x_2)\in[0,1]^2 : x_1\geq 0.6,\ x_2\geq 0.6,\ \text{and } (x_1<0.75 \text{ or } x_2<0.75)\},\\
&C_3=\{(x_1,x_2)\in[0,1]^2 : x_1\geq 0.75,\ x_2\geq 0.75\}.
\end{align*}

\(C_2\) is non-convex: it is an L-shaped region obtained by removing the upper-right square \(C_3\) from the square \([0.6,1]\times[0.6,1]\), as displayed in Figure~\ref{figure:convex_nonconvex_levels}. Hence two points in \(C_2\) may have a convex combination lying in \(C_3\). For instance,
\((0.9,0.7)\in C_2,\) and \((0.7,0.9)\in C_2,\) but
\(\frac12(0.9,0.7)+\frac12(0.7,0.9)=(0.8,0.8)\in C_3.\)

\begin{figure}[h!]
\centering
\tdplotsetmaincoords{20}{30}
\newcommand{\PanelScale}{3.00}

\tikzset{
  lab/.style={font=\scriptsize},
  axisxy/.style={-{Stealth[length=2.0mm]}, line width=0.45pt, draw=black!55},
  axisz/.style={-{Stealth[length=2.4mm]}, line width=0.8pt, draw=black},
  plateau/.style={draw=black, line width=0.5pt}
}

% Schematic heights corresponding to levels 1,2,3
\def\zOne{0.55}
\def\zTwo{1.00}
\def\zThr{1.45}
\def\zAxisTop{2.85}

\begin{tikzpicture}[
  tdplot_main_coords,
  scale=\PanelScale,
  line join=round,
  line cap=round
]

% C1: non-convex L-shaped region
\filldraw[plateau,fill=black!25]
  (0,0,\zOne) --
  (1,0,\zOne) --
  (1,0.6,\zOne) --
  (0.6,0.6,\zOne) --
  (0.6,1,\zOne) --
  (0,1,\zOne) -- cycle;

% C2: non-convex L-shaped region
\filldraw[plateau,fill=black!15]
  (0.6,0.6,\zTwo) --
  (1,0.6,\zTwo) --
  (1,0.75,\zTwo) --
  (0.75,0.75,\zTwo) --
  (0.75,1,\zTwo) --
  (0.6,1,\zTwo) -- cycle;

% C3: upper-right rectangle
\filldraw[plateau,fill=black!6]
  (0.75,0.75,\zThr) --
  (1,0.75,\zThr) --
  (1,1,\zThr) --
  (0.75,1,\zThr) -- cycle;

% Region labels
\node[lab] at (0.50,0.65,\zOne) {$\mathbf{1}$};
\node[lab] at (0.65,0.70,\zTwo) {$\mathbf{2}$};
\node[lab] at (0.80,0.82,\zThr) {$\mathbf{3}$};

% Axes
\draw[axisxy] (0,0,0) -- (1.10,0,0)
  node[anchor=west] {$x_1$};

\draw[axisxy] (0,0,0) -- (0,1.10,0)
  node[anchor=north west] {$x_2$};

\draw[axisz] (0,0,0) -- (0,0,\zAxisTop)
  node[anchor=east] {$L_P(x_1,x_2)$};

\end{tikzpicture}

\vspace{-0.8em}

\caption{
A trustworthiness level function \(L_P:[0,1]^2\to[3]\) with non-convex fibers. Both \(C_1\) and \(C_2\) are L-shaped and therefore non-convex. In particular, \(C_2\) is not closed under convex combinations: two points in \(C_2\) may have a midpoint in \(C_3\).
}
\label{figure:convex_nonconvex_levels}
\end{figure}

%%%%%%%%%%%%%%%%%%%%%%%%%%%%%%%%%%%%%%%%%%%%%%
%%%%%%%%%%%%%%%%%%
\section{An Example of Non-Trivial Reachability}
\label{appendix:reachability_example}
Consider a two-dimensional quantified profile space \(I_P=[0,1]^2,\) and suppose that the trustworthiness level function assigns the same level \(k\) throughout
\(C_k=[0.3,0.8]^2,\) for some \(k\in[K].\) Hence all profiles considered below correspond to states in the same trustworthiness fiber \(S_{\diam,k}\).
Let \(x=((0.7,0.4),k),\ y=((0.4,0.7),k).\)
We first define two primitive transformation relations. (To simplify notation, we focus on the quantified profiles of states in \(S_\diam\).) An improving transformation \(R^+\) permits a sufficiently small coordinate-wise
non-decreasing change,
\[
q\,R^+\,q'
\quad\Longleftrightarrow\quad
q'_i\geq q_i\ \text{for }i=1,2,
\qquad
\|q'-q\|_\infty\leq 0.05,
\]
while a degrading transformation \(R^-\) permits an analogous coordinate-wise non-increasing change,
\[
q\,R^-\,q'
\quad\Longleftrightarrow\quad
q'_i\leq q_i\ \text{for }i=1,2,
\qquad
\|q'-q\|_\infty\leq 0.05.
\]
Here, \(\|\cdot\|_\infty\) denotes the sup norm on \(\mathbb{R}^2\), \(\|q'-q\|_\infty=\max_{i=1,2}|q'_i-q_i|.\)
The inequalities in the definitions of \(R^{+}\) and \(R^{-}\) encode institutional constraints, for instance resource-related constraints on profile improvements or accepted degradations. If \(\mathcal U_\diam\) contains primitive transformation types represented
by \(R^+\) and \(R^-\), then the local step bound alone does not prevent reachability between distant profiles. For instance, \(x\) can first undergo a sequence of \(R^-\)-transformations,
\(
q_x=(0.7,0.4)
\to
(0.65,0.4)
\to\cdots\to
(0.4,0.4),
\)
followed by a sequence of \(R^+\)-transformations,
\(
(0.4,0.4)
\to
(0.4,0.45)
\to\cdots\to
(0.4,0.7)=q_y.
\)
As a result, every intermediate profile remains in \(C_k\): hence this path establishes \(x\preceq_\tau y\). The reverse path analogously establishes \(y\preceq_\tau x\). 

Now suppose that the admissible lifecycle transformations are additionally subject to a feasibility constraint. Let
\[
A=[0.35,0.75]\times[0.35,0.50],
\qquad
B=[0.35,0.50]\times[0.60,0.75],
\]
and define the feasible region \(F=A\sqcup B.\)
The sets \(A\) and \(B\) may represent, for instance, states attainable under
two technically or institutionally admissible operating regimes. Profiles between them remain well-defined elements of \(C_k\), but transitions through them are not admissible under the lifecycle model. In general, providers routinely distinguish among operating regimes supported by different validated workflows, resource envelopes, deployment environments, model versions, or human-oversight arrangements. For instance, a medical AI system may be validated under a standard radiology workflow and under an enhanced specialist-review workflow, while a recruitment system may be operated under distinct production and higher-oversight regimes. Both regimes may satisfy the requirements associated with the same trustworthiness level \(k\), even though intermediate combinations of the quantified dimensions have not been validated, are technically unavailable, or are institutionally inadmissible. Thus, \(A\) and \(B\) are provider- or institution-defined regions within \(C_k\), while \(F=A\sqcup B\) represents the profiles attainable under the lifecycle regimes currently recognized as feasible and admissible.

If we modify \(R^+\) and \(R^-\) by requiring additionally that the corresponding quantified profiles belong to \(F\), then two states
\(x,y\in S_{\diam,k}\) with \(q_x\in A\) and \(q_y\in B\) still have the same trustworthiness level, and both improving and degrading primitive
transformations remain available locally. Nevertheless, \(x\not\preceq_\tau y\). In fact, any path from \(A\) to \(B\) would have to cross the gap
\(0.50<q_2<0.60\). 
No profile in this gap belongs to \(F\), while any direct transition from \(A\) to \(B\) changes the second coordinate by at least
\(0.10\), violating the step bound \(0.05\). Consequently, no finite composition of admissible primitive transformations connects \(x\) to \(y\). 

This example shows that trustworthiness-preserving reachability depends \emph{on the paths generated by the primitive transformation vocabulary \(\mathcal U_\diam\) and on the technical, resource, institutional, or procedural constraints encoded by its relations}. It also distinguishes two levels of operational change. Within a fiber \(C_k\), providers or institutions may distinguish different feasible operating regimes, such as \(A\) and \(B\), and lifecycle transformations may or may not permit transitions between them. Such regime changes leave the global trustworthiness level unchanged. By contrast, crossing from \(C_k\) to \(C_{k'}\), \(k\neq k'\), changes the trustworthiness level itself and is therefore identity-relevant in the AI identity constructions \citep{Ferrario2025TrustworthinessMetaphysics,Ferrario2026CategoryIdentity}. Thus, \emph{operational change} is more fine-grained than \emph{trustworthiness level change}: substantial differences may arise within one level, while transitions across levels mark a stronger boundary. Such transitions may provide evidence
relevant to regulatory reassessment, including whether a change qualifies
as a \emph{substantial modification} under the AIA \citep{EUAIAct2024}.

%%%%%%%%%%%%%%%%%%%%%%%%%%%%%%%%%%%%%%%%%%%%%%%%%%%%%%%%%%%%%%%%%%%%%
\section{Homogeneous Transformations and Local Ordered Structure}
\label{appendix:homogeneous_transformations}
We provide a \emph{local} example of the
reachability-monotonicity condition used in
Proposition~\ref{prop:identity_compatible_composition}. The construction applies
locally to regions in which admissible lifecycle transformations are homogeneous, namely, all improving or all degrading with respect to a fixed collection of profile dimensions. This provides a simple-to-check compatibility condition with respect to the convex compositions \(\omega_{\alpha,k}(x,y)=\left(
\alpha q_x+(1-\alpha)q_y,k\right),\ \alpha\in[0,1]\) on \(S_{\diam,k}\)---see~\eqref{eq:convex_comp_states}. Fix \(k\in[K]\), a set of dimensions \(J\subseteq\{1,\ldots,n\}\), and
a region \(U\subseteq S_{\diam,k}\) that is closed under the compositions \(\omega_{\alpha,k}\). Let
\(\pi_J:I_P\longrightarrow[0,1]^{|J|}\)
denote projection onto the coordinates indexed by the set \(J\), and let
\(\pi_j:I_P\longrightarrow[0,1]\) denote projection onto coordinate
\(j\). For \(x,y\in U\), define two relations:
\[
(x,y)\in R_{J}^{+}
\quad\Longleftrightarrow\quad
\pi_j(q_x)\leq \pi_j(q_y)
\quad\text{for every }j\in J,
\]
and
\[
(x,y)\in R_{J}^{-}
\quad\Longleftrightarrow\quad
\pi_j(q_x)\geq \pi_j(q_y)
\quad\text{for every }j\in J.
\]
The relation \(R_J^{+}\) represents transformations that are weakly improving in all selected dimensions, whereas \(R_J^{-}\) represents weakly degrading transformations. A lifecycle path is \emph{\(J\)-homogeneous} when all its primitive steps belong to
\(R_J^{+}\), or all belong to \(R_J^{-}\). Let \(\preceq_{J}^{+}\) and \(\preceq_{J}^{-}\) denote reachability
generated by such homogeneous paths inside \(U\). Because the relations above are themselves reflexive and transitive,
\[
x\preceq_{J}^{+}y
\quad\Longleftrightarrow\quad
\pi_J(q_x)\leq \pi_J(q_y)
\]
coordinatewise, while
\[
x\preceq_{J}^{-}y
\quad\Longleftrightarrow\quad
\pi_J(q_x)\geq \pi_J(q_y).
\]
Indeed, every homogeneous path yields the corresponding chain of
coordinatewise inequalities. Conversely, whenever the endpoint inequality holds, the pair already belongs to \(R_J^{+}\), respectively \(R_J^{-}\), and therefore gives a one-step homogeneous path. In the local regime considered here, we assume that the restriction of lifecycle reachability \(\preceq_\tau\) to \(U\) is generated by one of these homogeneous relations. Thus, for the relevant sign \(\sigma\in\{+,-\}\),
\(\preceq_\tau\!\mid_U=\preceq_J^\sigma.\) We arrive at: 

\begin{proposition}
\label{prop:local_order_compat}
For either sign \(\sigma\in\{+,-\}\), if
\[
x_1\preceq_J^\sigma y_1,
\qquad
x_2\preceq_J^\sigma y_2,
\]
then
\[
\omega_{\alpha,k}(x_1,x_2)
\preceq_J^\sigma
\omega_{\alpha,k}(y_1,y_2)
\]
for every \(\alpha\in[0,1]\).
\end{proposition}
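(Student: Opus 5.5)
The plan is to reduce the claim to the coordinatewise characterization of \(\preceq_J^\sigma\) established just before the statement, and then use the fact that convex combinations with nonnegative coefficients preserve coordinatewise inequalities. I will treat the case \(\sigma=+\) in full. The case \(\sigma=-\) then follows by reversing every inequality, or equivalently by noting that \(x\preceq_J^- y\) iff \(y\preceq_J^+ x\).

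First I unpack the hypotheses. By the characterization \(x\preceq_J^{+}y \Leftrightarrow \pi_J(q_x)\leq\pi_J(q_y)\), the assumptions give, for every \(j\in J\),
\[
\pi_j(q_{x_1})\leq\pi_j(q_{y_1}),\qquad \pi_j(q_{x_2})\leq\pi_j(q_{y_2}).
\]

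Second, I compute the profiles of the composed states using \eqref{eq:convex_comp_states}. The profile of \(\omega_{\alpha,k}(x_1,x_2)\) is \(\alpha q_{x_1}+(1-\alpha)q_{x_2}\). The profile of \(\omega_{\alpha,k}(y_1,y_2)\) is the analogous expression in \(q_{y_1},q_{y_2}\). Each projection \(\pi_j\) is linear, so
\[
\pi_j\bigl(\alpha q_{x_1}+(1-\alpha)q_{x_2}\bigr)=\alpha\,\pi_j(q_{x_1})+(1-\alpha)\,\pi_j(q_{x_2}).
\]
Since \(\alpha\geq0\) and \(1-\alpha\geq0\), multiplying the two inequalities above by these weights and adding them gives the coordinatewise inequality between the composed profiles for every \(j\in J\).

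Third, I check that the composed states lie in \(U\). This holds because \(U\) is assumed closed under \(\omega_{\alpha,k}\), and it is needed because \(\preceq_J^\sigma\) is defined on \(U\). I then apply the converse half of the characterization: the endpoint inequality puts the pair in \(R_J^{+}\), which is a one-step homogeneous path, so \(\omega_{\alpha,k}(x_1,x_2)\preceq_J^+\omega_{\alpha,k}(y_1,y_2)\).

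The argument has no real obstacle. The only point needing care is this membership check in \(U\), together with the fact that the second component stays at level \(k\). The latter is built into the definition of \(\omega_{\alpha,k}\), and the former follows from the closure assumption on \(U\).
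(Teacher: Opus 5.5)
Your proposal is correct and follows the paper's proof essentially verbatim. Both arguments unpack \(\preceq_J^{+}\) into coordinatewise inequalities on \(J\), combine them using the nonnegative weights \(\alpha\) and \(1-\alpha\), and obtain the degrading case by reversing the inequalities; your explicit checks that the composed states lie in \(U\) (by closure under \(\omega_{\alpha,k}\)) and that the endpoint inequality yields a one-step path in \(R_J^{+}\) spell out what the paper leaves implicit.
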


\begin{proof}
Consider first the improving relation. For every \(j\in J\),
\(\pi_j(q_{x_1})\leq\pi_j(q_{y_1}),\ \pi_j(q_{x_2})\leq\pi_j(q_{y_2}).
\)
Since \(\alpha,1-\alpha\geq0\),
\(
\alpha\pi_j(q_{x_1})+(1-\alpha)\pi_j(q_{x_2})
\leq
\alpha\pi_j(q_{y_1})+(1-\alpha)\pi_j(q_{y_2}).
\)
Hence
\(
\omega_{\alpha,k}(x_1,x_2)
\preceq_J^+
\omega_{\alpha,k}(y_1,y_2).
\)
The degrading case follows identically with all inequalities reversed.
\end{proof}

This gives a concrete local sufficient condition for the reachability-monotonicity assumption of
Proposition~\ref{prop:identity_compatible_composition}: on \(U\), the condition holds whenever lifecycle reachability is generated by one of the
homogeneous relations above. If \(U=S_{\diam,k}\), the same argument verifies
reachability monotonicity on the entire fiber.
In particular, define mutual \emph{homogeneous} reachability by
\(x\equiv_J y\Longleftrightarrow
x\preceq_J^+y\) and \(y\preceq_J^+x.\)
The same equivalence relation is obtained from \(\preceq_J^{-}\), and
\(x\equiv_J y\Longleftrightarrow 
\pi_J(q_x)=\pi_J(q_y).\) Then, we arrive at

\begin{corollary}
The equivalence relation \(\equiv_J\) is a congruence for the barycentric
operations. That is,
\[
x_1\equiv_J y_1,
\qquad
x_2\equiv_J y_2
\]
implies
\[
\omega_{\alpha,k}(x_1,x_2)
\equiv_J
\omega_{\alpha,k}(y_1,y_2)
\]
for every \(\alpha\in[0,1]\).
\end{corollary}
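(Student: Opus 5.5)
The corollary follows from Proposition~\ref{prop:local_order_compat} applied in both directions, together with the characterization \(x\equiv_J y\Longleftrightarrow\pi_J(q_x)=\pi_J(q_y)\) recorded just before the statement.

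First I will confirm that \(\equiv_J\) is an equivalence relation. It is the symmetrization of the preorder \(\preceq_J^+\), which is reflexive and transitive because it coincides with the coordinatewise order on \(\pi_J(q)\). Then, given \(x_1\equiv_J y_1\) and \(x_2\equiv_J y_2\), I unpack the definition into the four relations
\[
x_1\preceq_J^+y_1,\qquad y_1\preceq_J^+x_1,\qquad x_2\preceq_J^+y_2,\qquad y_2\preceq_J^+x_2 .
\]

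Applying Proposition~\ref{prop:local_order_compat} with \(\sigma=+\) to the first and third relations gives \(\omega_{\alpha,k}(x_1,x_2)\preceq_J^+\omega_{\alpha,k}(y_1,y_2)\). Applying it to the second and fourth gives the reverse relation. Together these are exactly \(\omega_{\alpha,k}(x_1,x_2)\equiv_J\omega_{\alpha,k}(y_1,y_2)\). As a cross-check, the same conclusion follows directly from the equality characterization and linearity of \(\pi_J\):
\[
\pi_J\bigl(\alpha q_{x_1}+(1-\alpha)q_{x_2}\bigr)=\alpha\pi_J(q_{x_1})+(1-\alpha)\pi_J(q_{x_2})=\pi_J\bigl(\alpha q_{y_1}+(1-\alpha)q_{y_2}\bigr).
\]

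The only point needing care is well-definedness: the composed states must lie in \(U\) for the relation \(\preceq_J^+\) (defined inside \(U\)) to apply to them. This holds because \(U\) is assumed closed under the compositions \(\omega_{\alpha,k}\). Beyond that, the argument is routine and I expect no real obstacle.
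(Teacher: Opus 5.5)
Your proposal is correct and matches the paper's proof, which simply applies Proposition~\ref{prop:local_order_compat} in both directions. Your extra points are sound details that the paper leaves implicit: the direct check via linearity of \(\pi_J\), and the observation that closure of \(U\) under \(\omega_{\alpha,k}\) is what places the composed states where \(\preceq_J^+\) applies.
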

\begin{proof}
Apply Proposition~\ref{prop:local_order_compat} in both directions.
\end{proof}

%%%%%%%%%%%%%%%%%%%%%%%%%%%%%%%%%%%%%%%%%%%%%%%%%%%
%%%%%%%%%%%%%%%%%%%%%%%%%%%%%%%%%%%%%%%%%%%%%%%%%%%
%% The next two lines define the bibliography style to be used, and
%% the bibliography file.
\bibliographystyle{ACM-Reference-Format}
\bibliography{sample-base}

\end{document}